\documentclass{article}
\usepackage{spconf}
\usepackage[T1]{fontenc}
\usepackage{amsmath,amssymb,amsthm,mathtools,graphicx,booktabs}
\allowdisplaybreaks[4]
\usepackage[hidelinks,hypertexnames=false]{hyperref}
\makeatletter
\let\originalThmhead\thmhead@plain
\def\thmhead@plain#1#2#3{%
  \thmname{#1}\thmnumber{ #2}\thmnote{ {\the\thm@notefont#3}}}
\makeatother
\newtheorem{theorem}{Theorem}
\newtheorem{lemma}{Lemma}
\newtheorem{proposition}{Proposition}

\theoremstyle{definition}
\newtheorem{assumption}{Assumption}
\newtheorem{definition}{Definition}
\theoremstyle{remark}
\newtheorem{remark}{Remark}
\newcommand{\paperTitle}{Private Decentralized Optimization with Noise Reduction and Bias Correction}
\newcommand{\E}{\mathbb E}

\usepackage{algorithm,algpseudocode}
\DeclareMathSizes{11}{11}{8}{6}
\renewcommand{\paragraph}[1]{\par\addvspace{.7\baselineskip}%
  \noindent\textbf{#1}\enspace}

\title{\MakeUppercase{\paperTitle}}

\name{
Yizhao Fan$^{1,2}$\quad
  Wenjian Luo$^{1,*}$ \quad
 Jiaojiao Zhang$^{2,*}$
\address{$^1$Harbin Institute of Technology, Shenzhen \quad $^2$Great Bay University}
\thanks{Corresponding authors: Wenjian Luo and Jiaojiao Zhang.}
}

\begin{document}
\maketitle
\begin{abstract}
Private decentralized learning is affected by sampling noise, privacy
noise, and decentralized bias under heterogeneous data.
We propose Private Recursive Decentralized Optimization (PRDO).
PRDO uses recursive estimation with same-batch gradient differences to reduce estimation errors caused by sampling and privacy noise, while its Exact Diffusion component corrects decentralized bias arising from data heterogeneity.
Our analysis establishes a nonconvex convergence bound without assuming uniformly bounded data heterogeneity across nodes.
It further gives a sufficient condition under which recursive gradient
differences yield strictly lower query sensitivity than private Exact
Diffusion, together with an example that rigorously satisfies
this condition.
Experiments show improved accuracy over
the evaluated baselines.
\end{abstract}
\begin{keywords}
Differential privacy, decentralized learning, recursive estimation
\end{keywords}

\vspace{-2mm}
\section{Introduction}\label{src:sec-introduction}
Decentralized learning trains a shared model through neighbor-to-neighbor
communication, avoiding a coordinating server and distributing the
communication load across the network. Although each node keeps its raw data
local, exchanged updates may reveal membership information or enable reconstruction of training records~\cite{zhu2019dlg}.
Differential privacy (DP) provides rigorous record-level protection by
calibrating random perturbations to query sensitivity~\cite{dwork2014algorithmic}.
However, sampling noise, privacy noise, and decentralized bias can still
degrade learning over heterogeneous data.

Existing methods address decentralized bias and sampling error through
different mechanisms. Gradient tracking estimates the network gradient by
exchanging an additional model-sized gradient-estimation
vector~\cite{nedic2017diging}. Exact Diffusion (ED) corrects decentralized
bias arising from heterogeneous local data through adaptation, correction,
and mixing while communicating only one model-sized vector per neighbor per
round~\cite{yuan2019exact1,yuan2020bias}. Nonconvex analyses establish
network-dependent convergence guarantees for ED~\cite{alghunaim2022unified},
and recent work improves its communication efficiency~\cite{faia2025communication}.
STORM instead targets sampling error through recursive estimation based on
same-batch gradient differences~\cite{cutkosky2019storm}. GT-HSGD and
DEEPSTORM extend this variance-reduction mechanism to decentralized
optimization but do not provide record-level
DP~\cite{xin2021gthsgd,mancinoball2023deepstorm}.

Recursive estimation has also been studied under DP. DP-SRM develops
centralized private recursive momentum~\cite{wang2023dpsrm}, DIFF2 uses
server-based gradient differences~\cite{murata2023diff2}, and PriSMA develops
clipped recursive momentum in a server--client
architecture~\cite{huang2025prisma}. These methods do not address
decentralized bias correction in neighbor-only networks. Complementary
approaches exploit restricted participant views and gossip for privacy
amplification~\cite{cyffers2022decentralization,cyffers2022muffliato}, use
correlated perturbations~\cite{allouah2024decor}, or improve communication
efficiency through deep quantization~\cite{francis2025deepquantizers}.
PrivSGP-VR provides private decentralized variance reduction through
stochastic gradient push, but communicates an additional push-sum scalar
alongside the model vector and assumes uniformly bounded data
heterogeneity~\cite{zhu2024privsgpvr}.

Despite these advances, reducing sampling and privacy errors while
correcting decentralized bias over heterogeneous data remains challenging.
We develop PRDO to address this challenge.
Our contributions are threefold.

\noindent{\bf (i)} We develop PRDO with recursive same-batch gradient differences to
reduce errors induced by stochastic gradient descent  and DP noise
variance, and Exact Diffusion correction to counteract
heterogeneity-induced decentralized bias.

\noindent
{\bf (ii)} We derive a nonconvex stationarity bound
separating sampling noise, privacy noise, and initial gradient heterogeneity,
without assuming uniformly bounded data heterogeneity across nodes.
We identify a sufficient condition under which recursive gradient
differences strictly reduce query sensitivity relative to private ED
and construct an instance satisfying it.

\noindent
{\bf (iii)} Experiments show improved accuracy
over the evaluated decentralized baselines.

\vspace{-2mm}
\section{Problem and Preliminaries}\label{src:secproblem}
\subsection{Problem formulation}\label{src:sec-objective}
We consider $n$ nodes that collaboratively train a model with $d$
parameters without a coordinating server.
Node $i$ holds a local data set $D_i=\{\zeta_{ir}\}_{r=1}^N$.
The nodes jointly solve
\[
\min_{x\in\mathbb R^d} f(x):=\frac1n\sum_i f_i(x),\quad
f_i(x):=\frac1N\sum_r\ell_i(x;\zeta_{ir}).
\]
Here $\ell_i(x;\zeta_{ir})$ is the loss of model $x$ on record
$\zeta_{ir}$. The local data sets $D_i$ are heterogeneous across nodes.

Each node maintains a local model $x_i\in\mathbb R^d$, yielding the
equivalent consensus-constrained formulation
\begin{equation}\label{eq:consensus-objective}
\min_{x_1,\ldots,x_n}\ \frac1n\sum_i f_i(x_i)
\quad\text{subject to}\quad x_1=\cdots=x_n.
\end{equation}
Thus decentralized learning requires both optimization of the global
objective and agreement among local models.

The communication graph is undirected, with node set
$\mathcal V=\{1,\ldots,n\}$ and edge set $\mathcal E$.
Let $\mathcal N_i$ contain node $i$ and its neighbors.
A mixing matrix $W$ assigns weight $W_{ij}$ to node $j$'s message,
with $W_{ij}=0$ for $j\notin\mathcal N_i$.
Each node mixes its neighbors' information using the weights $W_{ij}$.
This weighted averaging promotes agreement among local models and
helps satisfy the consensus constraint.
The conditions on $W$ are stated in Assumption~\ref{ass:network}.

\subsection{Differential privacy preliminaries}\label{src:sec-dp-preliminaries}
To protect training records, each node adds DP noise to its local query
before communicating.
We protect records against an external passive eavesdropper that
observes the complete collection of messages sent over all edges and rounds, and an internal
honest-but-curious node that follows the protocol but uses received
messages to infer records held by other nodes.

Write $D=(D_1,\ldots,D_n)$.
Add/remove adjacency $D\sim D'$ means that one record is added or
removed at one node, with all other local data fixed.

\begin{definition}[Record-level differential privacy]\label{def:record-dp}
	A randomized mechanism $\mathcal A$ that releases the complete communication
	transcript is $(\varepsilon,\delta)$-DP if, for every adjacent pair
	$D\sim D'$ and every measurable set $O$ of communication transcripts,
	\begin{equation}\label{eq:dp-definition}
		\Pr[\mathcal A(D)\in O]
		\leq e^\varepsilon\Pr[\mathcal A(D')\in O]+\delta.
	\end{equation}
\end{definition}
Definition~\ref{def:record-dp} applies DP~\cite{dwork2014algorithmic} to the transcript visible to the external
or internal observer described above. Our privacy analysis uses amplification
by Poisson sampling with unreleased record
identities~\cite{mironov2019sgm} and adaptive
composition~\cite{mironov2017rdp}.

\begin{algorithm}[th]
	\caption{The proposed PRDO}
	\label{alg:storm-ed}
	\begin{algorithmic}[1]
		\Require $\alpha,\gamma,T,W,b_0,b,C_g,C_\Delta,\nu_0,\nu_{\rm dp}$;
		$x_i^0=\psi_i^0=x^0$, $\forall i$.
		\For{$t=0,\ldots,T-1$; all nodes $i$ in parallel}
		\If{$t=0$}
		\State Draw $\mathcal B_i^0$ with rate $b_0/N$.
		\State $g_i^0\gets b_0^{-1}\!\sum_{r\in\mathcal B_i^0}
		\operatorname{clip}_{C_g}(\nabla\ell_i(x^0;\zeta_{ir}))$.
		\State Draw $\xi_i^0\sim\mathcal N(0,\nu_0^2I_d)$;
		$v_i^0\gets g_i^0+\xi_i^0$.
		\Else
		\State Draw $\mathcal B_i^t$ with rate $b/N$.
		\State Compute $g_i^t$ and $d_i^t$ from $\mathcal B_i^t$.
		\State Draw $\xi_i^t\sim\mathcal N(0,\nu_{\rm dp}^2I_d)$.
		\State Update $v_i^t$ by~\eqref{eq:main-recursion}.
		\EndIf
		\State $\psi_i^{t+1}\gets x_i^t-\alpha v_i^t$ (adaptation).
		\State $\phi_i^{t+1}\gets\psi_i^{t+1}+x_i^t-\psi_i^t$ (correction).
		\State Send $\phi_i^{t+1}$ to neighbors; receive $\phi_j^{t+1}$.
		\State $x_i^{t+1}\gets\sum_{j\in\mathcal N_i}W_{ij}\phi_j^{t+1}$ (mixing).
		\EndFor
	\end{algorithmic}
\end{algorithm}

\vspace{-2mm}
\section{Proposed Algorithm}\label{src:secalgorithm}
Our PRDO uses recursive gradient differences to reduce stochastic
estimation error and Exact Diffusion correction to remove the bias
caused by heterogeneous local data sets $D_i$ in decentralized updates.
Controlling these two errors improves learning performance.
The method uses stepsize $\alpha$ and momentum parameter $\gamma$,
following stochastic recursive momentum~\cite{cutkosky2019storm}.

At round $t$, node $i$ draws an independent Poisson batch
$\mathcal B_i^t$. Each record $\zeta_{ir}\in D_i$ is selected with
probability $b/N$ after initialization, where $b$ is the expected batch size.
The gradient estimator is normalized by the batch size:
\begin{equation}\label{eq:poisson}
\widehat g_i(x;\mathcal B_i^t)=\frac1b
\sum_{r\in\mathcal B_i^t}\nabla\ell_i(x;\zeta_{ir}).
\end{equation}
For a vector $u$, define
$\operatorname{clip}_{C}(u):=u/\max\{1,\|u\|/C\}$.
All nodes start from the same public model $x^0$.
Initialization uses expected batch size $b_0$.
Let $\nu_0^2$ and $\nu_{\rm dp}^2$ denote the initialization and
recursive DP noise variances, respectively.

To reduce sampling error, a record in $\mathcal B_i^t$ is evaluated
at the current model $x_i^t$ and previous model $x_i^{t-1}$,
giving the clipped estimators
\begin{align*}
g_i^t&=\frac1b\sum_{r\in\mathcal B_i^t}
\operatorname{clip}_{C_g}(\nabla\ell_i(x_i^t;\zeta_{ir})),\\
d_i^t&=\frac1b\sum_{r\in\mathcal B_i^t}
\operatorname{clip}_{C_\Delta}(\nabla\ell_i(x_i^t;\zeta_{ir})
-\nabla\ell_i(x_i^{t-1};\zeta_{ir})).
\end{align*}
For $t\geq1$, the recursive direction is
\begin{equation}\label{eq:main-recursion}
v_i^t=(1-\gamma)v_i^{t-1}+\gamma g_i^t
+(1-\gamma)d_i^t+\gamma\xi_i^t.
\end{equation}
The new data-dependent term is $\gamma g_i^t+(1-\gamma)d_i^t$;
we call it the fresh query because it uses the newly sampled batch,
rather than the previous private direction $v_i^{t-1}$.
Same-batch evaluations share sampling fluctuations that cancel in their
difference. Under smoothness, the variance of this difference
has an upper bound proportional to the squared model change.
Together with the $\gamma$ weight on the fresh
gradient, this reduces injected sampling error when model changes are small.

To control decentralized bias, each node first adapts its model,
then corrects the update and mixes information from its neighbors.
Algorithm~\ref{alg:storm-ed} gives the complete  procedure.

For our PRDO, each node sends one $d$-vector to each neighbor per round.
Setting $\gamma=1$ recovers private Exact Diffusion (DP-ED).
For $\gamma<1$, recursive differences reduce the new sampling error
and can improve accuracy when their sensitivity is sufficiently small.
When clipping is inactive, exact gradients and zero DP noise give
$v_i^t=\nabla f_i(x_i^t)$ for every $\gamma$, so PRDO preserves
ED's deterministic trajectory.

\vspace{-2mm}
\section{Analysis}\label{src:secanalysis}
To quantify the accuracy of PRDO, we state assumptions, determine the required DP noise variance, and then bound
the resulting stationarity error.

\begin{assumption}[Smooth record losses]\label{ass:objectives}
Each record loss has an $L_s$-Lipschitz gradient:
\[
\|\nabla\ell_i(x;\zeta)-\nabla\ell_i(y;\zeta)\|
\leq L_s\|x-y\|,\quad\forall\zeta,x,y.
\]
\end{assumption}

\begin{assumption}[Communication network]\label{ass:network}
The fixed mixing matrix $W$ is symmetric, entrywise nonnegative,
doubly stochastic, positive semidefinite, and supported on an
undirected connected graph.
Its eigenvalue one is simple, and its second-largest eigenvalue
is $\lambda\in[0,1)$.
\end{assumption}

\begin{assumption}[Stochastic oracle]\label{ass:oracle}
For a point $x$ fixed before the current batch is drawn,
\begin{align}
\E_t[\widehat g_i(x;\mathcal B_i^t)]&=\nabla f_i(x),
\label{cv:oracle-unbiased}\\
\E_t\|\widehat g_i(x;\mathcal B_i^t)-\nabla f_i(x)\|^2
&\leq \sigma_{\rm sgd}^2/b.
\label{cv:oracle-variance}
\end{align}
Here $\E_t$ denotes expectation conditional on the past, and
$\sigma_{\rm sgd}^2$ is the sampling-variance scale.
At initialization, the variance bound uses $b_0$ in place of $b$.
\end{assumption}
\begin{assumption}[Inactive clipping]\label{ass:clipping}
The clipping operations in Algorithm~\ref{alg:storm-ed} are inactive
almost surely:
\begin{align}
\|\nabla\ell_i(x_i^t;\zeta)\|&\leq C_g,
\label{pv:gradient-bound}\\
\|\nabla\ell_i(x_i^t;\zeta)-\nabla\ell_i(x_i^{t-1};\zeta)\|
&\leq C_\Delta
\label{pv:difference-bound}
\end{align}
for every node, round, and record queried by the algorithm.
\end{assumption}
This regime is consistent with bounded-gradient analyses in private optimization~\cite{li2023dp2} and avoids the stochastic bias introduced by active clipping~\cite{koloskova2023clipping}. In the Appendix~\ref{app:sensitivity-example}, we construct an  example in which suitable thresholds keep both clipping operations inactive.

\subsection{Privacy analysis}\label{src:secprivacy}
To determine the DP noise, we bound the effect of one
record on the data-dependent query.
The fresh query in~\eqref{eq:main-recursion} is
\begin{equation}\label{eq:main-query}
q_i^t=\gamma g_i^t+(1-\gamma)d_i^t.
\end{equation}
For add/remove adjacent data sets $D\sim D'$, fix the same past
transcript and sampling mask $\mathcal B$. If the changed record
$\zeta$ is selected, substituting~\eqref{eq:main-query} gives
\[
\begin{aligned}
&q_i^t(D;\mathcal B)-q_i^t(D';\mathcal B)\\
&=\frac1b\Big\{\gamma\operatorname{clip}_{C_g}
(\nabla\ell_i(x_i^t;\zeta))\\
&\quad +(1-\gamma)\operatorname{clip}_{C_\Delta}
(\nabla\ell_i(x_i^t;\zeta)-\nabla\ell_i(x_i^{t-1};\zeta))\Big\}.
\end{aligned}
\]
The difference is zero if the record is not selected.
The clipping bounds and the triangle inequality yield
\[
\begin{aligned}
\Delta_{q_i^t}
&:=\sup_{D\sim D'}\|q_i^t(D;\mathcal B)-q_i^t(D';\mathcal B)\|\\
&\leq\frac{\gamma C_g+(1-\gamma)C_\Delta}{b}.
\end{aligned}
\]
We therefore define the sensitivity scale
\begin{equation}\label{eq:sensitivity}
S_\gamma:=\gamma C_g+(1-\gamma)C_\Delta,
\end{equation}
so that $\Delta_{q_i^t}\leq S_\gamma/b$ uniformly over histories and masks.
Remark~\ref{rem:sensitivity-benefit} explains when recursive estimates 
can reduce this sensitivity scale.

For $0<\varepsilon\leq1$, $0<\delta<1/2$, and the parameter schedule
used below, a sufficient choice of the DP noise variances has order
\begin{equation}\label{eq:dp-noise-variance}
\begin{aligned}
\nu_0^2&=\mathcal O\!\left(
\frac{C_g^2b^2}{b_0^2\varepsilon^2}
\left[\frac{T\log(1/\delta)}{N^2}
+\frac{\log^2(1/\delta)}{b^2}\right]\right),\\
\nu_{\rm dp}^2&=\mathcal O\!\left(
\frac{S_\gamma^2}{\gamma^2\varepsilon^2}
\left[\frac{T\log(1/\delta)}{N^2}
+\frac{\log^2(1/\delta)}{b^2}\right]\right).
\end{aligned}
\end{equation}
With~\eqref{eq:dp-noise-variance}, Algorithm~\ref{alg:storm-ed} is
record-level $(\varepsilon,\delta)$-DP for the external and internal
observers described in Section~\ref{src:sec-dp-preliminaries}.

\begin{figure*}[ht]
	\centering
	\includegraphics[width=.95\textwidth]{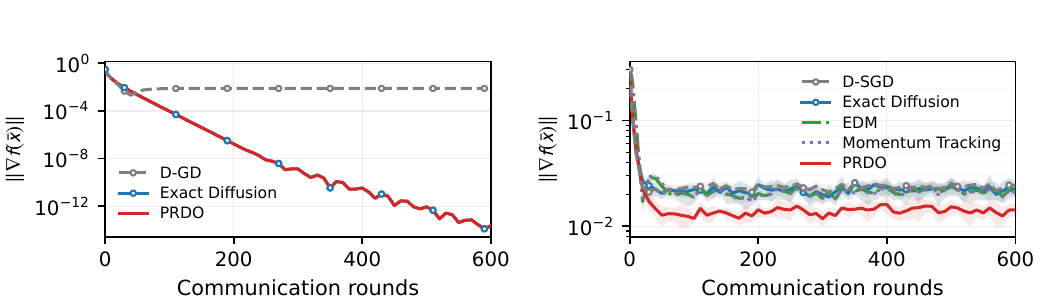}
	\vspace{-3mm}
	\caption{Nonprivate logistic regression on a 32-node ring. Left: full-gradient setting. Right: stochastic-gradient setting.}
	\label{fig:rq1}
\end{figure*}

\subsection{Convergence and privacy}\label{src:secconvergence}
With the DP noise variance determined, we now analyze convergence.
Let $\bar x^t=n^{-1}\sum_i x_i^t$ denote the network-average model.
Let $\Delta_f^0=f(x^0)-\inf_x f(x)<\infty$ and define the initial
gradient heterogeneity
\[
\zeta_0^2=\frac1n\sum_i
\|\nabla f_i(x^0)-\nabla f(x^0)\|^2.
\]

\begin{theorem}[Private convergence]\label{thm:utility}
Under Assumptions~\ref{ass:objectives}--\ref{ass:clipping},
for any $c_\gamma\in(0,1]$, there exists $c_\alpha>0$ such that,
with the DP noise variance in~\eqref{eq:dp-noise-variance} and
\[
\alpha=\frac{c_\alpha(1-\sqrt\lambda)}{L_s(T+1)^{1/3}},
\; \gamma=\frac{c_\gamma}{(T+1)^{2/3}},
\; b_0=\lceil b(T+1)^{1/3}\rceil,
\]
for $0<\varepsilon\leq1$, $0<\delta<1/2$ and horizons with
$b_0\leq N$, Algorithm~\ref{alg:storm-ed} is $(\varepsilon,\delta)$-DP and
\begin{align}
&\frac1T\sum_{t=0}^{T-1}\E\|\nabla f(\bar x^t)\|^2
=\mathcal O\Bigg(
\underbrace{\frac{L_s\Delta_f^0}{(1-\sqrt\lambda)T^{2/3}}}
_{\text{optimization}}\notag\\
&\quad+\underbrace{\frac{\lambda\zeta_0^2}{bnT(1+\sqrt\lambda)}}
_{\substack{\text{initial gradient}\\\text{heterogeneity}}}
+\underbrace{\frac{\sigma_{\rm sgd}^2}{nbT^{2/3}}
 \left(1+\frac{\lambda}{b}\right)}
_{\substack{\text{sampling and}\\\text{network propagation}}}
\label{eq:private-utility}\\
&\quad+\underbrace{
\frac{dS_\gamma^2}{n\gamma^2\varepsilon^2T^{1/3}}
 \left(1+\frac{\lambda}{b}\right)
 \left[\frac{T\log(1/\delta)}{N^2}
 +\frac{\log^2(1/\delta)}{b^2}\right]
}
_{\text{privacy and network propagation}}
\Bigg).\notag
\end{align}
\end{theorem}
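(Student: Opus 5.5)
The proof has two parts. Privacy follows from Section~\ref{src:secprivacy}. Utility follows from a Lyapunov argument that couples the average model, an Exact Diffusion consensus error, and the STORM estimation error.

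\emph{Privacy.} Since $\Delta_{q_i^t}\le S_\gamma/b$ uniformly over histories, the released quantity $\gamma(q_i^t/\gamma+\xi_i^t)$ is a Poisson-subsampled Gaussian mechanism. Its effective noise multiplier is $\nu_{\rm dp}\gamma b/S_\gamma$ and its sampling rate is $b/N$. The initialization round has multiplier $\nu_0 b_0/C_g$ and rate $b_0/N$. The transcript (all $\phi_i^{t+1}$) is a post-processing of the $v_i^t$, and the changed record lives at a single node. We therefore compose $T$ subsampled-Gaussian RDP bounds \cite{mironov2019sgm,mironov2017rdp} and convert to $(\varepsilon,\delta)$. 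The two bracketed terms in \eqref{eq:dp-noise-variance} are the usual amplified regime and the unamplified floor. The condition $b_0\le N$ keeps the sampling rate valid.

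\emph{Utility setup.} Stack the iterates and use the standard ED reformulation $x^{t+1}=\bar W(2x^t-x^{t-1}-\alpha(v^t-v^{t-1}))$ with $\bar W=W\otimes I$. Let $\bar v^t$ be the average of the $v_i^t$. Then $\bar x^{t+1}=\bar x^t-\alpha\bar v^t$.

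\textbf{Step 1 (descent).} $L_s$-smoothness of $f$ (inherited from Assumption~\ref{ass:objectives}) gives
\[
f(\bar x^{t+1})\le f(\bar x^t)-\tfrac{\alpha}{2}\|\nabla f(\bar x^t)\|^2-\tfrac{\alpha}{4}\|\bar v^t\|^2+\alpha\|\bar v^t-\overline{\nabla f}(x^t)\|^2+\alpha L_s^2\tfrac1n\|x^t-\mathbf 1\bar x^t\|^2 .
\]

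\textbf{Step 2 (estimation error).} Let $e_i^t=v_i^t-\nabla f_i(x_i^t)$. By Assumption~\ref{ass:clipping} clipping is inactive, so
\[
e^t=(1-\gamma)e^{t-1}+\gamma(g^t-\nabla f)+(1-\gamma)\big(d^t-\nabla f(x^t)+\nabla f(x^{t-1})\big)+\gamma\xi^t .
\]
The last three terms are martingale differences. The difference term has variance at most $L_s^2\|x^t-x^{t-1}\|^2/b$. For the averaged error this gives
\[
\E\|\bar e^t\|^2\le(1-\gamma)\E\|\bar e^{t-1}\|^2+\tfrac{2\gamma^2\sigma_{\rm sgd}^2}{nb}+\tfrac{2L_s^2}{nb}\E\tfrac1n\|x^t-x^{t-1}\|^2+\tfrac{\gamma^2d\nu_{\rm dp}^2}{n}.
\]
Unaveraged, the same recursion holds without the $1/n$ factors; that version feeds the consensus bound. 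The initial error is $\E\|\bar e^0\|^2\le\sigma_{\rm sgd}^2/(nb_0)+d\nu_0^2/n$.

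\textbf{Step 3 (consensus via ED).} Use the eigendecomposition of $W$ and the standard ED transformation of \cite{alghunaim2022unified,yuan2020bias}. This defines a transformed error $\hat z^t$ with contraction factor $\sqrt\lambda$, which is where $1-\sqrt\lambda$ in $\alpha$ comes from. It satisfies
\[
\|\hat z^{t+1}\|^2\le\sqrt\lambda\|\hat z^t\|^2+\tfrac{c\alpha^2\lambda}{1-\sqrt\lambda}\|v^{t+1}-v^t\|^2 .
\]
Because ED is driven by increments of $v$, heterogeneity enters only through the initial condition $\hat z^0\propto\alpha(\nabla f_i(x^0)-\nabla f(x^0))$. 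This yields the $\lambda\zeta_0^2$ term with no bounded-heterogeneity assumption, and the $1/(bnT)$ scaling once it is summed and divided by $T$. The increments split as $v^{t+1}-v^t=\nabla f(x^{t+1})-\nabla f(x^t)+e^{t+1}-e^t$. The gradient part is $\le L_s\|x^{t+1}-x^t\|$. The error part is bounded using Step 2, which produces the $\lambda/b$ propagation factors multiplying $\sigma_{\rm sgd}^2$ and $d\nu_{\rm dp}^2$.

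\textbf{Step 4 (Lyapunov function).} Define
\[
\Phi^t=f(\bar x^t)+\tfrac{c_1\alpha}{\gamma}\|\bar e^t\|^2+\tfrac{c_2}{n}\|\hat z^t\|^2 .
\]
Choose $c_\alpha$ small enough, given $c_\gamma$, so that the $\|x^t-x^{t-1}\|^2$ and consensus terms are absorbed by $-\tfrac{\alpha}{4}\|\bar v^t\|^2$ and the contraction of $\hat z$. This absorption relies on $L_s^2\alpha^2/(\gamma b)\lesssim1$, which holds for $\alpha\sim T^{-1/3}$ and $\gamma\sim T^{-2/3}$. Telescoping and dividing by $\alpha T$ gives the following contributions:
\begin{itemize}
\item optimization: $\Delta_f^0/(\alpha T)$, of order $T^{-2/3}$;
\item initial estimation error: $\E\|\bar e^0\|^2/(\gamma T)$; with $b_0=b(T+1)^{1/3}$ its sampling part is of order $\sigma^2/(nbT^{2/3})$;
\item sampling: $\gamma\sigma^2/(nb)$, also of order $T^{-2/3}$;
\item privacy: $\gamma d\nu_{\rm dp}^2/n$.
\end{itemize}
Substituting \eqref{eq:dp-noise-variance}, the privacy term becomes $dS_\gamma^2/(n\gamma^2\varepsilon^2)\cdot\gamma[\cdot]$, which matches the stated $T^{-1/3}$-scaled term up to the constant $c_\gamma$. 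The $\nu_0$ contribution is dominated by it.

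\emph{Main obstacle.} The hardest step is Step 3 combined with Step 4: controlling $\|v^{t+1}-v^t\|^2$ inside the ED consensus recursion when $v$ carries fresh DP noise $\gamma\xi$ and the recursive error. The bound must keep the factor $\lambda/b$ and the $1/n$ averaging, rather than losing a factor $n$. I would first bound the per-node error $\|e^t\|^2$ with its own Lyapunov weight $\alpha^2\lambda/((1-\sqrt\lambda)\gamma)$. I would then verify that all cross terms close under a single small choice of $c_\alpha$.
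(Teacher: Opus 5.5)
Your privacy argument and the skeleton of the utility argument coincide with the paper. That includes the average dynamics $\bar x^{t+1}=\bar x^t-\alpha\bar v^t$, the descent inequality with the negative $\tfrac\alpha4\|\bar v^t\|^2$ term, and the STORM error recursion with the same-batch bound $L_s^2\|x_i^t-x_i^{t-1}\|^2/b$. It also includes absorption through $L_s^2\alpha^2/(bn\gamma)\lesssim1$ and the role of $b_0$. Packaging everything into one potential $\Phi^t$, instead of summing the inequalities and absorbing in two stages, is cosmetic.

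The gap is in Step~3. An energy recursion driven by $\|v^{t+1}-v^t\|^2$ cannot have gain $c\alpha^2\lambda/(1-\sqrt\lambda)$ when $c$, and the constant in $\|PX^t\|_F^2\lesssim\|\hat z^t\|^2$, are independent of $\lambda$.

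\emph{Why the recursion fails.} On an eigenmode $z$ of $W$, ED reads $p^{t+1}=2zp^t-zp^{t-1}-\alpha z(w^t-w^{t-1})$. At frequency $\arccos\sqrt z$, the transfer function from increments to $p$ has squared modulus $\alpha^2z^2/[(1-\sqrt z)^3(1+3\sqrt z+4z)]$. This exceeds the summed gain $\alpha^2z/(1-\sqrt z)^2$ that your recursion implies by a factor of order $(1-z)^{-1}$. The paper's energy shows the same thing: an increment injected into the correction state costs $\alpha^2\lambda/((1-\sqrt\lambda)(1-\lambda))$ in \eqref{cv:network-energy}.

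\emph{Why it matters.} Part of your increment is $\mathcal G(X^{t+1})-\mathcal G(X^t)$. You bound it by $L_s^2\|X^{t+1}-X^t\|_F^2$, which contains $2L_s^2(\|PX^{t+1}\|_F^2+\|PX^t\|_F^2)$. Disagreement therefore feeds back on itself with a coefficient of order $\alpha^2L_s^2\lambda/((1-\sqrt\lambda)^2(1-\lambda))=c_\alpha^2\lambda/((1-\lambda)(T+1)^{2/3})$. The schedule $\alpha\propto1-\sqrt\lambda$ does not make this small uniformly in $\lambda$. You would need either $c_\alpha\lesssim\sqrt{1-\lambda}$, which worsens the displayed $(1-\sqrt\lambda)^{-1}$ in the optimization term, or a transient $T\gtrsim(1-\lambda)^{-3/2}$. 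The paper instead obtains $c_\alpha$ depending only on $c_\gamma,b,n$, valid for every horizon.

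The missing idea is the paper's split around a common-model reference, $V^t=\mathcal M^t+R^t$ with $\mathcal M^t=\mathcal G(\overline X^t)$.
\begin{itemize}
\item The residual $R^t=E^t+\mathcal G(X^t)-\mathcal G(\overline X^t)$ drives the model coordinate undifferenced, where the forcing carries no $(1-z)^{-1}$. Disagreement feedback is then $8\alpha^2L_s^2\lambda/(1-\sqrt\lambda)^2\le\tfrac12$ once $\alpha\le(1-\sqrt\lambda)/(4L_s)$.
\item Only $\mathcal M^{t+1}-\mathcal M^t$ passes through the correction channel, and $\|P(\mathcal M^{t+1}-\mathcal M^t)\|_F\le\sqrt n\,L_s\alpha\|\bar v^t\|$. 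There the $(1-\lambda)^{-1}$ multiplies $\alpha^4$ (the constant $K_U$) and is cancelled by $\alpha\propto1-\sqrt\lambda$.
\item Heterogeneity still enters only through $\mathcal E_0$, so your observation that it appears only at initialization survives.
\end{itemize}
Your proposed weight $\alpha^2\lambda/((1-\sqrt\lambda)\gamma)$ for the per-node error is in fact the weight suited to this undifferenced forcing by $E^t$, not to increments.

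A minor slip: $\gamma d\nu_{\rm dp}^2/n$ is a factor $T^{-1/3}$ smaller than the stated privacy term, not equal to it up to $c_\gamma$. In the paper, the $T^{-1/3}$ term comes from the initialization error $h_0/(\gamma T)$ after using $\nu_0\le\nu_{\rm dp}$. Your sharper accounting, including the domination of the $\nu_0$ contribution, is harmless because the theorem is only an upper bound.
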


\begin{remark}[Sensitivity benefit of recursion]
\label{rem:sensitivity-benefit}
The recursion in~\eqref{eq:main-recursion} reuses $v_i^{t-1}$, while
the fresh query in~\eqref{eq:main-query} replaces part of the fresh
gradient with a same-record difference. Compared with $\gamma=1$, if
$C_\Delta<C_g$ and $\gamma<1$, then
\begin{equation}\label{eq-sens}
	S_\gamma=\gamma C_g+(1-\gamma)C_\Delta<C_g.
\end{equation}
Moreover, $L_s$-smoothness and the adaptation identity give
\[
\begin{aligned}
	\|\nabla\ell_i(x_i^t;\zeta)-\nabla\ell_i(x_i^{t-1};\zeta)\|
	&\leq L_s\|x_i^t-x_i^{t-1}\|,\\
	x_i^t-x_i^{t-1}
	&=-\alpha v_i^{t-1}+(x_i^t-\psi_i^t).
\end{aligned}
\]
For general smooth objectives, these relations show when the condition \eqref{eq-sens} holds. When $x_i^t\approx\psi_i^t$, bounded directions
$\|v_i^{t-1}\|$ and a small $\alpha$ keep successive iterates
close and thus make the same-record gradient difference small. A small
$\gamma$ then further reduces $S_\gamma$.
Appendix~\ref{app:sensitivity-example} constructs a specific example for which the condition \eqref{eq-sens} holds for every
sampled batch and Gaussian history, yielding
$S_\gamma=\mathcal O(\gamma+\alpha)$.
For fixed $\gamma$ and privacy budget, a smaller sensitivity scale
$S_\gamma$ reduces the privacy-induced
error term in~\eqref{eq:private-utility}.
\end{remark}

\vspace{-2mm}
\section{Experiments}\label{src:secexperiments}
\noindent\textbf{Nonprivate logistic regression.}
We used 32 ring-connected nodes with 2000 records each and a
nonconvex regularizer. D-GD/D-SGD, Exact Diffusion, EDM~\cite{hu2025edm}, Momentum
Tracking~\cite{takezawa2023momentum}, and PRDO shared stepsize
$\alpha=1$. PRDO used $\gamma=0.1$; the momentum baselines used
history weight $0.9$. 

As in Fig.~\ref{fig:rq1}, with full gradients, Exact Diffusion and PRDO followed the same 
trajectory, as established in Appendix~\ref{app:average}, while D-GD
retained decentralized bias. 
In the minibatch comparison, PRDO attained lower late-stage
gradient norms over five paired sampling streams, indicating higher
optimization accuracy. The small $\gamma$ downweights fresh sampling
noise, while same-batch gradient differences track gradient changes,
helping reduce the error caused by SGD variance.
Momentum Tracking sent two model-sized vectors per neighbor, whereas
the other methods sent one.

\noindent\textbf{Private CIFAR-10 implementation.}
{
We compared PRDO with DP-ED, DP-EDM, DP-MT, and DP-DSGD on ten
ring-connected nodes under $\varepsilon\in\{4,6,8\}$ and
$\delta=10^{-5}$. All methods used the same VGG-style network and
heterogeneous data partition. Implementation
and privacy-accounting details are given in
Appendix~\ref{app:experiments}.

As shown in Table~\ref{tab:cifar-private}, PRDO attained higher test
accuracy than the evaluated baselines at all three privacy budgets.
The results are consistent with recursive gradient differences reducing
noise-induced estimation error while Exact Diffusion corrects model
disagreement caused by heterogeneous data.
}

\vspace{-3mm}
\begin{table}[h]
\centering
\caption{Test accuracy (\%) on CIFAR-10 under different privacy budgets.}
\label{tab:cifar-private}
{
\setlength{\tabcolsep}{3.5pt}
\begin{tabular}{lccc}
\toprule
Method & $\varepsilon=4$ & $\varepsilon=6$ & $\varepsilon=8$\\
\midrule
PRDO & $\mathbf{53.368}$ & $\mathbf{54.284}$ & $\mathbf{54.424}$\\
DP-ED & $52.964$ & $54.084$ & $54.186$\\
DP-EDM & $53.070$ & $53.948$ & $54.224$\\
DP-MT & $53.020$ & $53.966$ & $54.220$\\
DP-DSGD & $47.834$ & $48.690$ & $49.002$\\
\bottomrule
\end{tabular}
}
\end{table}
\vspace{-3mm}

\paragraph{Conclusions.}PRDO reduces sampling and privacy errors while
counteracting decentralized bias. Our analysis characterizes its conditional
DP benefit, and experiments show improved accuracy over the evaluated baselines.
Future work will remove the inactive-clipping assumption and analyze
the bias introduced when clipping is active.
\label{end:technical}
\vspace{-1mm}
\paragraph{Acknowledgment.}This work was supported by the National Natural Science Foundation of China
under Grants 62576121 (W. Luo) and 12601589 (J. Zhang).

\clearpage
\bibliographystyle{IEEEbib_shortauthors}
\bibliography{references}
\clearpage
\setlength{\oddsidemargin}{0pt}
\setlength{\evensidemargin}{0pt}
\setlength{\topmargin}{0pt}
\setlength{\headheight}{0pt}
\setlength{\headsep}{0pt}
\setlength{\textwidth}{6.5in}
\setlength{\textheight}{9in}
\onecolumn
\pagestyle{plain}
\setlength{\footskip}{24pt}
\setlength{\emergencystretch}{2em}
\makeatletter
\renewcommand{\normalsize}{\@setfontsize\normalsize{11pt}{13.2pt}%
  \abovedisplayskip 10pt plus 2pt minus 5pt
  \belowdisplayskip \abovedisplayskip
  \abovedisplayshortskip 0pt plus 3pt
  \belowdisplayshortskip 6pt plus 3pt minus 3pt}
\makeatother
\normalsize
\appendix
\makeatletter
\let\thmhead@plain\originalThmhead
\makeatother
\makeatletter
\def\@sect#1#2#3#4#5#6[#7]#8{%
  \refstepcounter{#1}%
  \edef\@svsec{\csname the#1\endcsname.\hskip 0.6em}%
  \begingroup
    \ifnum #2=1\bf\centering
      {\interlinepenalty\@M \@svsec #8\par}%
    \else\ifnum #2=2\bf
      \noindent{\interlinepenalty\@M \@svsec #8\par}%
    \else\it
      \noindent{\interlinepenalty\@M \@svsec #8\par}%
    \fi\fi
  \endgroup
  \csname #1mark\endcsname{#7}%
  \addcontentsline{toc}{#1}{\protect\numberline{\csname the#1\endcsname}#7}%
  \@tempskipa #5\relax
  \@xsect{\@tempskipa}}
\makeatother
\numberwithin{equation}{section}
\makeatletter
\@addtoreset{theorem}{section}
\@addtoreset{lemma}{section}
\@addtoreset{proposition}{section}
\@addtoreset{corollary}{section}
\@addtoreset{assumption}{section}
\@addtoreset{remark}{section}
\makeatother
\renewcommand{\thetheorem}{\thesection.\arabic{theorem}}
\renewcommand{\thelemma}{\thesection.\arabic{lemma}}
\renewcommand{\theproposition}{\thesection.\arabic{proposition}}
\renewcommand{\thecorollary}{\thesection.\arabic{corollary}}
\renewcommand{\theassumption}{\thesection.\arabic{assumption}}
\renewcommand{\theremark}{\thesection.\arabic{remark}}
\section*{Proof appendices}

Under Assumption~\ref{ass:clipping}, the clipping operations are
identities along the algorithm trajectory.  Throughout the convergence
proofs, we therefore write
\[
g_i^t=\widehat g_i(x_i^t;\mathcal B_i^t),\qquad
\widetilde g_i^t=\widehat g_i(x_i^{t-1};\mathcal B_i^t),\qquad
d_i^t=g_i^t-\widetilde g_i^t.
\]
Thus~\eqref{eq:main-recursion} takes the equivalent form
\[
v_i^t=g_i^t+(1-\gamma)(v_i^{t-1}-\widetilde g_i^t)
+\gamma\xi_i^t.
\]

\section{Notation}\label{app:setup}\label{app:conv}
\subsection{Matrix notation}
Let $X^t,V^t\in\mathbb R^{n\times d}$ have rows
$(x_i^t)^\top,(v_i^t)^\top$, respectively. Write
\[
P=I_n-\frac{\mathbf1\mathbf1^\top}{n},\quad
\bar v^t=\frac1n(V^t)^\top\mathbf1,\quad
\overline X^t=\mathbf1(\bar x^t)^\top.
\]
Here $\mathbf1$ is the all-ones vector and $P$ is the orthogonal
disagreement projector. Vector norms are Euclidean; $\|\cdot\|_F$
and $\|\cdot\|_2$ denote the Frobenius and matrix operator norms.
Let $\mathcal G(X)$ have row $\nabla f_i(x_i)^\top$ when $X$
has row $x_i^\top$. Define the estimation errors
\[
e_i^t=v_i^t-\nabla f_i(x_i^t),\quad
E^t=V^t-\mathcal G(X^t),\quad
\bar e^t=\frac1n\sum_i e_i^t,
\]
and their moments
\begin{equation}\label{cv:moment-definitions}
C_t=\E\|PX^t\|_F^2,\quad H_t=\E\|E^t\|_F^2,\quad
h_t=\E\|\bar e^t\|^2,\quad U_t=\E\|\bar v^t\|^2.
\end{equation}
Thus $C_t/n$ is the expected consensus error, and
$(nT)^{-1}\sum_tC_t$ is its average over the $T$ updates.

\subsection{Conditional moments}
Let $\mathcal F_t$ contain the initial quantities and all batches
and Gaussian draws before round $t$, and write
$\E_t[\cdot]=\E[\cdot\mid\mathcal F_t]$.
Fresh batches are independent across nodes and rounds, conditional
on the past, and are independent of all Gaussian draws.
For the combined sampling and DP variance, write
\begin{equation}\label{cv:unified-variance}
\Sigma^2=\frac{\sigma_{\rm sgd}^2}{b}+d\nu_{\rm dp}^2.
\end{equation}
The calibration~\eqref{pv:main-noise-orders}, $b_0\geq b$, and
$S_\gamma\geq\gamma C_g$ give $\nu_0\leq\nu_{\rm dp}$; hence
$\nu_{\rm dp}^2$ also upper bounds the initialization noise variance
in the convergence estimates below.

\section{Descent preliminaries}\label{app:average}
\subsection{Gradient differences}
Under Assumption~\ref{ass:objectives}, each local gradient
$\nabla f_i$ is also $L_s$-Lipschitz, since
\[
\|\nabla f_i(x)-\nabla f_i(y)\|
\leq\frac1N\sum_r
\|\nabla\ell_i(x;\zeta_{ir})-\nabla\ell_i(y;\zeta_{ir})\|
\leq L_s\|x-y\|.
\]
Averaging over nodes gives the same bound for $\nabla f$.
For two points fixed before drawing the same batch, independence
of the Poisson inclusion events cancels the cross terms and gives
\begin{align}
&\E_t\|\widehat g_i(x;\mathcal B_i^t)-\widehat g_i(y;\mathcal B_i^t)
-\nabla f_i(x)+\nabla f_i(y)\|^2\notag\\
&=\frac{1-b/N}{bN}\sum_r
\|\nabla\ell_i(x;\zeta_{ir})-\nabla\ell_i(y;\zeta_{ir})\|^2
\leq\frac{L_s^2}{b}\|x-y\|^2.
\label{cv:oracle-difference}
\end{align}
We use~\eqref{cv:oracle-difference} to bound the gradient-difference
term in the error recursion~\eqref{cv:error-decomposition}.

\subsection{Descent bound}
Eliminating the auxiliary variables in Algorithm~\ref{alg:storm-ed} gives
\begin{equation}\label{cv:second-order}
X^{t+1}=W(2X^t-X^{t-1}-\alpha V^t+\alpha V^{t-1}),
\end{equation}
where $X^{-1}=X^0$ and $V^{-1}=0$ encode the initialization.

\begin{lemma}[Average dynamics and inexact-direction descent]
\label{cv:lemma-average}
Under Assumptions~\ref{ass:objectives}--\ref{ass:clipping},
the initialized algorithm satisfies
\begin{equation}\label{cv:average-dynamics}
\bar x^{t+1}=\bar x^t-\alpha\bar v^t.
\end{equation}
If $\alpha\leq1/(2L_s)$, then
\begin{equation}\label{cv:descent}
\mathbb E f(\bar x^{t+1})
\leq \mathbb E f(\bar x^t)
-\frac\alpha2\mathbb E\|\nabla f(\bar x^t)\|^2
-\frac\alpha4U_t+\alpha h_t+\frac{\alpha L_s^2}{n}C_t.
\end{equation}
\end{lemma}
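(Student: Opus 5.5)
The proof has two parts: the average recursion \eqref{cv:average-dynamics}, obtained by averaging the second-order form \eqref{cv:second-order}, and the descent bound \eqref{cv:descent}, obtained from the standard smoothness inequality with an inexact direction.

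\textbf{Average dynamics.} Multiply \eqref{cv:second-order} on the left by $n^{-1}\mathbf1^\top$. Because $W$ is doubly stochastic and symmetric (Assumption~\ref{ass:network}), $\mathbf1^\top W=\mathbf1^\top$. This gives
\[
\bar x^{t+1}-\bar x^t=(\bar x^t-\bar x^{t-1})-\alpha(\bar v^t-\bar v^{t-1}).
\]
The initialization $X^{-1}=X^0$, $V^{-1}=0$ is what makes the first step work. It is cleaner to check it directly from Algorithm~\ref{alg:storm-ed}: at $t=0$ we have $\psi_i^0=x_i^0$, so $\phi_i^1=x^0-\alpha v_i^0$, and hence $\bar x^1=\bar x^0-\alpha\bar v^0$. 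Telescoping the displayed identity from this base case shows that $\bar x^{t+1}-\bar x^t+\alpha\bar v^t$ is constant in $t$ and equal to zero, which is \eqref{cv:average-dynamics}. I would also verify the elimination behind \eqref{cv:second-order}. Since $\phi^{t+1}=\psi^{t+1}+x^t-\psi^t$ and $\psi^{t+1}-\psi^t=x^t-x^{t-1}-\alpha(v^t-v^{t-1})$, the averaged version of this chain gives the same conclusion without reference to $W$ beyond column sums.

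\textbf{Descent bound.} $L_s$-smoothness of $f$ (shown in the preliminaries) gives
\[
f(\bar x^{t+1})\le f(\bar x^t)-\alpha\langle\nabla f(\bar x^t),\bar v^t\rangle+\tfrac{L_s\alpha^2}{2}\|\bar v^t\|^2 .
\]
Apply the polarization identity
\[
-\langle a,b\rangle=\tfrac12\|a-b\|^2-\tfrac12\|a\|^2-\tfrac12\|b\|^2
\]
with $a=\nabla f(\bar x^t)$ and $b=\bar v^t$. Then use $\alpha L_s\le 1/2$, so that $\tfrac{L_s\alpha^2}{2}\le\tfrac\alpha4$. The $\|\bar v^t\|^2$ terms combine into $-\tfrac\alpha4\|\bar v^t\|^2$.

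The remaining term is $\tfrac\alpha2\|\nabla f(\bar x^t)-\bar v^t\|^2$. Since $\bar v^t=\bar e^t+\frac1n\sum_i\nabla f_i(x_i^t)$ and $\nabla f(\bar x^t)=\frac1n\sum_i\nabla f_i(\bar x^t)$, the inequality $\|u+w\|^2\le2\|u\|^2+2\|w\|^2$ bounds this term by
\[
\alpha\|\bar e^t\|^2+\alpha\Big\|\tfrac1n\sum_i\big(\nabla f_i(x_i^t)-\nabla f_i(\bar x^t)\big)\Big\|^2 .
\]
Jensen's inequality and the $L_s$-Lipschitz continuity of each $\nabla f_i$ bound the second piece by $\frac{\alpha L_s^2}{n}\sum_i\|x_i^t-\bar x^t\|^2=\frac{\alpha L_s^2}{n}\|PX^t\|_F^2$.

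Taking total expectations and using the moment definitions \eqref{cv:moment-definitions} yields \eqref{cv:descent} exactly.

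\textbf{Main obstacle.} The only delicate point is the initialization convention in the average recursion, namely that the correction step at $t=0$ introduces no extra term. This follows from $\psi_i^0=x_i^0=x^0$. Everything else is routine.
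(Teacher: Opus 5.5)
Your proposal is correct and follows the paper's proof essentially step for step. You average the second-order recursion using $\mathbf1^\top W=\mathbf1^\top$ and induct from the $t=0$ base case, then combine the smoothness upper bound, the polarization identity, $L_s\alpha\le 1/2$, and the split of $\|\bar v^t-\nabla f(\bar x^t)\|^2$ into $2\|\bar e^t\|^2$ plus the Jensen/Lipschitz disagreement term $\frac{2L_s^2}{n}\|PX^t\|_F^2$. Your direct check of the base case from Algorithm~\ref{alg:storm-ed} is just a more explicit form of the paper's convention $X^{-1}=X^0$, $V^{-1}=0$.
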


\begin{proof}
\emph{Step 1: average dynamics.}
Taking node averages in~\eqref{cv:second-order} and using
$\mathbf1^\top W=\mathbf1^\top$ gives
\[
\bar x^{t+1}-\bar x^t
=\bar x^t-\bar x^{t-1}-\alpha\bar v^t+\alpha\bar v^{t-1}.
\]
At $t=0$, initialization gives $\bar x^{-1}=\bar x^0$ and
$\bar v^{-1}=0$, so
\[
\bar x^1-\bar x^0
=\bar x^0-\bar x^{-1}-\alpha\bar v^0+\alpha\bar v^{-1}
=-\alpha\bar v^0.
\]
Suppose
$\bar x^t-\bar x^{t-1}=-\alpha\bar v^{t-1}$.
Substituting this induction hypothesis into the recurrence gives
\[
\bar x^{t+1}-\bar x^t
=-\alpha\bar v^{t-1}-\alpha\bar v^t+\alpha\bar v^{t-1}
=-\alpha\bar v^t.
\]
This proves~\eqref{cv:average-dynamics} by induction.

\emph{Step 2: separate estimation and disagreement errors.}
Since $f$ is $L_s$-smooth, its one-step upper bound and
\eqref{cv:average-dynamics} imply
\begin{align*}
f(\bar x^{t+1})
&\leq f(\bar x^t)-\alpha\langle\nabla f(\bar x^t),\bar v^t\rangle
+\frac{L_s\alpha^2}{2}\|\bar v^t\|^2\\
&=f(\bar x^t)-\frac\alpha2\|\nabla f(\bar x^t)\|^2
-\frac{\alpha(1-L_s\alpha)}2\|\bar v^t\|^2
+\frac\alpha2\|\bar v^t-\nabla f(\bar x^t)\|^2.
\end{align*}
The equality is the polarization identity
$-2\langle a,b\rangle=-\|a\|^2-\|b\|^2+\|a-b\|^2$.
To bound the last term, separate the estimation error from the
difference between local and common-model gradients.
Use $\bar v^t=n^{-1}\sum_i\nabla f_i(x_i^t)+\bar e^t$ and
$\nabla f(\bar x^t)=n^{-1}\sum_i\nabla f_i(\bar x^t)$.
\begin{align*}
\|\bar v^t-\nabla f(\bar x^t)\|^2
&\leq2\|\bar e^t\|^2
+2\left\|\frac1n\sum_i[\nabla f_i(x_i^t)-\nabla f_i(\bar x^t)]\right\|^2\\
&\leq2\|\bar e^t\|^2
+\frac2n\sum_i\|\nabla f_i(x_i^t)-\nabla f_i(\bar x^t)\|^2\\
&\leq2\|\bar e^t\|^2+\frac{2L_s^2}{n}\|PX^t\|_F^2.
\end{align*}
The three inequalities follow from the squared triangle inequality,
Jensen's inequality, and $L_s$-Lipschitzness, respectively.
Insert this estimate into the smoothness bound and use
$1-L_s\alpha\geq1/2$.
Taking expectations then gives~\eqref{cv:descent}, including the
negative direction term needed for the later absorption step.
\end{proof}

\paragraph{Preservation of the exact-gradient trajectory.}
The average identity alone does not establish equivalence of the
local trajectories.
For that purpose, consider full gradients and zero DP noise, with
$v_i^0=\nabla f_i(x^0)$.
If $v_i^{t-1}=\nabla f_i(x_i^{t-1})$, then~\eqref{eq:main-recursion} gives
\[
v_i^t=\nabla f_i(x_i^t)
+(1-\gamma)[\nabla f_i(x_i^{t-1})-\nabla f_i(x_i^{t-1})]
=\nabla f_i(x_i^t).
\]
Induction establishes the identity at every node and every
available iteration.
The updates in Algorithm~\ref{alg:storm-ed} then give the entire ED trajectory,
even when local gradients differ at initialization.

\section{Estimation error bounds}\label{app:storm}
The descent inequality~\eqref{cv:descent} requires the averaged
estimation error $h_t$.
Disagreement instead depends on the total local estimation error
$H_t$, which does not benefit directly from node averaging.
We derive both recursions from the same decomposition into historical
error and fresh innovation.
A bound on $\sum_t\E\|X^t-X^{t-1}\|_F^2$ then expresses the
model-movement input through $C_t$ and $U_t$.

\begin{lemma}[Initialization, error recursions, and movement]
\label{cv:lemma-storm}
Under Assumptions~\ref{ass:objectives}--\ref{ass:clipping}, the initial errors satisfy
\begin{equation}\label{cv:initial-errors}
h_0\leq\frac{\sigma_{\rm sgd}^2}{nb_0}+\frac{d\nu_{\rm dp}^2}{n},\quad
H_0\leq\frac{n\sigma_{\rm sgd}^2}{b_0}+nd\nu_{\rm dp}^2.
\end{equation}
For $1\leq t\leq T-1$,
\begin{align}
h_t&\leq(1-\gamma)^2h_{t-1}
+\frac{2\gamma^2\sigma_{\rm sgd}^2}{nb}
+\frac{2(1-\gamma)^2L_s^2}{bn^2}\mathbb E\|X^t-X^{t-1}\|_F^2
+\frac{\gamma^2d\nu_{\rm dp}^2}{n},
\label{cv:h-recursion}\\
H_t&\leq(1-\gamma)^2H_{t-1}
+\frac{2\gamma^2n\sigma_{\rm sgd}^2}{b}
+\frac{2(1-\gamma)^2L_s^2}{b}\mathbb E\|X^t-X^{t-1}\|_F^2
+\gamma^2nd\nu_{\rm dp}^2.
\label{cv:H-recursion}
\end{align}
Summing the model movements gives
\begin{equation}\label{cv:movement}
\sum_{t=1}^{T-1}\mathbb E\|X^t-X^{t-1}\|_F^2
\leq4\sum_{t=0}^{T-1}C_t+n\alpha^2\sum_{t=0}^{T-1}U_t.
\end{equation}
Consequently,
\begin{align}
\sum_{t=0}^{T-1}h_t
&\leq\frac{h_0}{\gamma}
+\frac{2\gamma\sigma_{\rm sgd}^2T}{nb}
+\frac{8L_s^2}{bn^2\gamma}\sum_{t=0}^{T-1}C_t
+\frac{2L_s^2\alpha^2}{bn\gamma}\sum_{t=0}^{T-1}U_t
+\frac{\gamma d\nu_{\rm dp}^2T}{n},
\label{cv:h-sum}\\
\sum_{t=0}^{T-1}H_t
&\leq\frac{H_0}{\gamma}
+\frac{2\gamma n\sigma_{\rm sgd}^2T}{b}
+\frac{8L_s^2}{b\gamma}\sum_{t=0}^{T-1}C_t
+\frac{2nL_s^2\alpha^2}{b\gamma}\sum_{t=0}^{T-1}U_t
+\gamma nd\nu_{\rm dp}^2T.
\label{cv:H-sum}
\end{align}
\end{lemma}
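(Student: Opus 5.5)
The plan has four parts: the initial errors, a one-step error recursion, the movement bound, and the summed bounds.

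\emph{Initial errors.} At $t=0$ we have $e_i^0=g_i^0-\nabla f_i(x^0)+\xi_i^0$, with $g_i^0=\widehat g_i(x^0;\mathcal B_i^0)$ since clipping is inactive. Both summands have zero mean and are independent, and they are also independent across nodes. Hence Assumption~\ref{ass:oracle} with $b_0$ gives $\E\|e_i^0\|^2\leq\sigma_{\rm sgd}^2/b_0+d\nu_0^2$. Summing over $i$ gives the bound on $H_0$. For $h_0$, averaging $n$ independent mean-zero vectors divides the variance by $n$. Finally, $\nu_0\leq\nu_{\rm dp}$ as noted after~\eqref{cv:unified-variance}.

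\emph{Error recursion.} Starting from the equivalent form $v_i^t=g_i^t+(1-\gamma)(v_i^{t-1}-\widetilde g_i^t)+\gamma\xi_i^t$, subtract $\nabla f_i(x_i^t)$ to get
\[
e_i^t=(1-\gamma)e_i^{t-1}+\gamma\big(g_i^t-\nabla f_i(x_i^t)\big)+(1-\gamma)\big(d_i^t-\nabla f_i(x_i^t)+\nabla f_i(x_i^{t-1})\big)+\gamma\xi_i^t .
\]
To check this, note that $\gamma(g_i^t-\nabla f_i(x_i^t))+(1-\gamma)(g_i^t-\widetilde g_i^t-\nabla f_i(x_i^t)+\nabla f_i(x_i^{t-1}))$ plus $(1-\gamma)(v_i^{t-1}-\nabla f_i(x_i^{t-1}))$ reproduces $v_i^t-\nabla f_i(x_i^t)$ minus the noise term.

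Both $x_i^t$ and $x_i^{t-1}$ are $\mathcal F_t$-measurable, so the two innovation terms have zero conditional mean under~\eqref{cv:oracle-unbiased}. The noise $\xi_i^t$ is independent and zero-mean. Therefore the cross terms with $e_i^{t-1}$ vanish under $\E_t$, and we obtain
\[
\E_t\|e^t_i\|^2\leq(1-\gamma)^2\|e_i^{t-1}\|^2+2\gamma^2\sigma_{\rm sgd}^2/b+2(1-\gamma)^2L_s^2\|x_i^t-x_i^{t-1}\|^2/b+\gamma^2d\nu_{\rm dp}^2 .
\]
Here the two sampling innovations are combined with $\|a+b\|^2\leq2\|a\|^2+2\|b\|^2$, and the difference term is bounded by~\eqref{cv:oracle-difference}. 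Summing over $i$ yields~\eqref{cv:H-recursion}.

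For $h_t$, apply the same decomposition to the node average. Innovations at different nodes are conditionally independent with zero mean, so the variance of their average is $n^{-2}$ times the sum. This gives the factors $1/(nb)$, $1/(bn^2)$ and $d\nu_{\rm dp}^2/n$ in~\eqref{cv:h-recursion}.

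\emph{Movement bound.} This is the step I expect to be the main obstacle, because it depends on the ED structure. Decompose $X^t-X^{t-1}=P(X^t-X^{t-1})+\mathbf1(\bar x^t-\bar x^{t-1})^\top$. These two components are orthogonal. By~\eqref{cv:average-dynamics}, the average component has squared norm $n\alpha^2\|\bar v^{t-1}\|^2$. The disagreement component is bounded by $\|P(X^t-X^{t-1})\|_F^2\leq2\|PX^t\|_F^2+2\|PX^{t-1}\|_F^2$. Summing over $t$ from $1$ to $T-1$, each $C_t$ appears at most twice, which gives $4\sum_tC_t$. The terms $U_{t-1}$ are covered by $\sum_{t=0}^{T-1}U_t$. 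Together these give~\eqref{cv:movement}.

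\emph{Summed bounds.} Unroll the recursions as $h_t\leq(1-\gamma)^{2t}h_0+\sum_{s\leq t}(1-\gamma)^{2(t-s)}\rho_s$, where $\rho_s$ collects the inhomogeneous terms. Summing over $t$ and using $\sum_k(1-\gamma)^{2k}\leq1/\gamma$ (valid since $1-(1-\gamma)^2\geq\gamma$) gives
\[
\sum_th_t\leq h_0/\gamma+\gamma^{-1}\sum_s\rho_s .
\]
Bound $(1-\gamma)^2\leq1$ in the movement coefficient, then substitute~\eqref{cv:movement}. The constant terms $2\gamma^2\sigma_{\rm sgd}^2/(nb)$ and $\gamma^2 d\nu_{\rm dp}^2/n$, divided by $\gamma$ and multiplied by $T$, give the stated $\gamma T$ terms. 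The movement contribution becomes $8L_s^2/(bn^2\gamma)\sum_tC_t+2L_s^2\alpha^2/(bn\gamma)\sum_tU_t$. This proves~\eqref{cv:h-sum}, and~\eqref{cv:H-sum} follows in the same way.
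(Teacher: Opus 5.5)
Your proposal is correct and follows the paper's proof step for step. You use the same initialization argument via $\nu_0\leq\nu_{\rm dp}$, the same innovation decomposition with the squared triangle inequality and the same-batch bound~\eqref{cv:oracle-difference}, and the same orthogonal consensus/disagreement split for the movement sum. The only cosmetic difference is that you unroll the recursions into geometric series, whereas the paper sums them directly, drops the negative endpoint, and divides by the contraction gap $1-(1-\gamma)^2=\gamma(2-\gamma)\geq\gamma$; both routes give identical constants.
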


\begin{proof}
\emph{Step 1: initial error moments.}
At initialization the two terms of
\[
e_i^0=\widehat g_i(x^0;\mathcal B_i^0)-\nabla f_i(x^0)+\xi_i^0
\]
are centered and independent, so
$\mathbb E\|e_i^0\|^2\leq\sigma_{\rm sgd}^2/b_0+d\nu_{\rm dp}^2$.
The errors at distinct nodes are also centered and independent. Summing
over nodes gives the $H_0$ estimate, whereas
$\mathbb E\|n^{-1}\sum_i e_i^0\|^2=n^{-2}\sum_i\mathbb E\|e_i^0\|^2$
gives the $h_0$ estimate in\,\eqref{cv:initial-errors}.

\emph{Step 2: conditional innovation variances.}
Subtracting $\nabla f_i(x_i^t)$ from\,\eqref{eq:main-recursion} gives
\begin{align}
e_i^t={}&(1-\gamma)e_i^{t-1}
+\gamma[\widehat g_i(x_i^t;\mathcal B_i^t)-\nabla f_i(x_i^t)]
+\gamma\xi_i^t\notag\\
&+(1-\gamma)[\widehat g_i(x_i^t;\mathcal B_i^t)
-\widehat g_i(x_i^{t-1};\mathcal B_i^t)
-\nabla f_i(x_i^t)+\nabla f_i(x_i^{t-1})].
\label{cv:error-decomposition}
\end{align}
Both model points and the historical error are
$\mathcal F_t$-measurable.
Conditional unbiasedness at both points makes the fresh innovation
centered:
$\mathbb E_t[e_i^t-(1-\gamma)e_i^{t-1}]=0$.
The Gaussian cross term vanishes by independence and zero mean.
The two sampling terms use the same batch and need not be independent,
so we bound their sum by the squared triangle inequality:
\begin{align}
\E_t\|e_i^t-(1-\gamma)e_i^{t-1}\|^2
\leq{}&2\gamma^2\E_t
 \|\widehat g_i(x_i^t;\mathcal B_i^t)-\nabla f_i(x_i^t)\|^2
+\gamma^2d\nu_{\rm dp}^2\notag\\
&+2(1-\gamma)^2\E_t\|\widehat g_i(x_i^t;\mathcal B_i^t)
-\widehat g_i(x_i^{t-1};\mathcal B_i^t)
-\nabla f_i(x_i^t)+\nabla f_i(x_i^{t-1})\|^2\notag\\
\leq{}&\frac{2\gamma^2\sigma_{\rm sgd}^2}{b}
+\frac{2(1-\gamma)^2L_s^2}{b}\|x_i^t-x_i^{t-1}\|^2
+\gamma^2d\nu_{\rm dp}^2.
\label{cv:innovation-bound}
\end{align}
The ordinary variance bound~\eqref{cv:oracle-variance} controls the
fresh gradient term.
The derived same-batch difference bound~\eqref{cv:oracle-difference}
controls the movement-dependent term.
Because the historical error is measurable with respect to the
past, its conditional cross term with the centered innovation vanishes:
\[
\mathbb E_t\|e_i^t\|^2
=(1-\gamma)^2\|e_i^{t-1}\|^2+\mathbb E_t\|e_i^t-(1-\gamma)e_i^{t-1}\|^2.
\]
Summing over nodes and taking total expectation proves
\eqref{cv:H-recursion}. For the mean error, conditional independence of
the fresh nodewise innovations gives instead
\[
\mathbb E_t\|\bar e^t\|^2
=(1-\gamma)^2\|\bar e^{t-1}\|^2
+\frac1{n^2}\sum_i\mathbb E_t\|e_i^t-(1-\gamma)e_i^{t-1}\|^2.
\]
Substitution of\,\eqref{cv:innovation-bound} and total expectation prove
\eqref{cv:h-recursion}.

\emph{Step 3: cumulative model movement.}
Orthogonality of the consensus and disagreement
subspaces and\,\eqref{cv:average-dynamics} yield
\begin{align*}
\|X^t-X^{t-1}\|_F^2
&=\|P(X^t-X^{t-1})\|_F^2
+n\|\bar x^t-\bar x^{t-1}\|^2\\
&\leq2\|PX^t\|_F^2+2\|PX^{t-1}\|_F^2
+n\alpha^2\|\bar v^{t-1}\|^2.
\end{align*}
Summing and taking expectations give
\[
\sum_{t=1}^{T-1}\mathbb E\|X^t-X^{t-1}\|_F^2
\leq2\sum_{t=1}^{T-1}C_t+2\sum_{t=0}^{T-2}C_t
+n\alpha^2\sum_{t=0}^{T-2}U_t,
\]
which implies\,\eqref{cv:movement} after enlarging the nonnegative sums.

\emph{Step 4: sum the error recursions, retaining endpoints.}
Summing\,\eqref{cv:h-recursion} from one to $T-1$ gives exactly
\begin{align*}
(1-(1-\gamma)^2)\sum_{t=0}^{T-1}h_t
&\leq h_0-(1-\gamma)^2 h_{T-1}+(T-1)\left(\frac{2\gamma^2\sigma_{\rm sgd}^2}{nb}+\frac{\gamma^2d\nu_{\rm dp}^2}{n}\right)\\ &\quad+\frac{2(1-\gamma)^2 L_s^2}{bn^2}\sum_{t=1}^{T-1}\mathbb E\|X^t-X^{t-1}\|_F^2.
\end{align*}
The exact contraction gap is
$1-(1-\gamma)^2=\gamma(2-\gamma)\geq\gamma$.
For the scheduled rate, this lower bound suffices, so we discard
the negative endpoint and enlarge $T-1$ to $T$.
Bounding $(1-\gamma)^2$ by one in the remaining nonnegative numerators
then gives
\[
\sum_{t=0}^{T-1}h_t
\leq\frac{h_0}{\gamma}
+\frac{2\gamma\sigma_{\rm sgd}^2T}{nb}
+\frac{\gamma d\nu_{\rm dp}^2T}{n}
+\frac{2L_s^2}{bn^2\gamma}\sum_{t=1}^{T-1}\mathbb E\|X^t-X^{t-1}\|_F^2.
\]
Inserting\,\eqref{cv:movement} proves\,\eqref{cv:h-sum}. Similarly,
\eqref{cv:H-recursion} sums to
\begin{align*}
(1-(1-\gamma)^2)\sum_{t=0}^{T-1}H_t
&\leq H_0-(1-\gamma)^2 H_{T-1}+(T-1)\left(\frac{2\gamma^2n\sigma_{\rm sgd}^2}{b}+\gamma^2nd\nu_{\rm dp}^2\right)\\ &\quad+\frac{2(1-\gamma)^2 L_s^2}{b}\sum_{t=1}^{T-1}\mathbb E\|X^t-X^{t-1}\|_F^2.
\end{align*}
Apply the same contraction and endpoint bounds, then insert
the movement estimate~\eqref{cv:movement}, to obtain~\eqref{cv:H-sum}.
\end{proof}

The direct SGD and DP terms in~\eqref{cv:h-sum} both decrease
with $\gamma$ when $\sigma_{\rm sgd}^2$ and $\nu_{\rm dp}^2$ are fixed.
The initialization and movement terms instead contain $1/\gamma$,
which expresses the cost of slow error contraction.
The network estimate below controls movement feedback before
these competing effects enter the final stationarity bound.

\section{Network error bounds}\label{app:network}
To bound $\sum_t h_t$ and $\sum_t H_t$
in~\eqref{cv:h-sum}--\eqref{cv:H-sum}, we need to bound
the accumulated disagreement $\sum_tC_t$.
A common-model reference separates local estimation error from
the movement of deterministic gradients.
ED's correction state then admits a contracting quadratic energy
with explicit dependence on the network spectrum.

Define the local-gradient stack at the common model by
\[
\mathcal M^t:=\mathcal G(\overline X^t),
\]
and the residual direction $R^t:=V^t-\mathcal M^t$.
The reference matrix $\mathcal M^t$ evaluates each local objective at the
averaged model $\bar x^t$.

\begin{lemma}[Network energy and cumulative disagreement]
\label{cv:lemma-network}
Under Assumptions~\ref{ass:objectives}--\ref{ass:clipping}, there is a nonnegative quadratic
energy $\mathcal E_t$ with
\begin{equation}\label{cv:initial-energy}
\mathcal E_0\leq\frac{n\alpha^2\lambda}{2(1-\lambda)}\zeta_0^2.
\end{equation}
It satisfies
\begin{align}
\mathcal E_{t+1}
&\leq\sqrt\lambda\,\mathcal E_t
+\frac{\alpha^2\lambda}{1-\sqrt{\lambda}}\|PR^t\|_F^2
+\frac{\alpha^2\lambda}{(1-\sqrt{\lambda})(1-\lambda)}\|P(\mathcal M^{t+1}-\mathcal M^t)\|_F^2,
\label{cv:network-energy}\\
\|PX^t\|_F^2&\leq4\mathcal E_t.
\label{cv:energy-coercivity}
\end{align}
If $\alpha\leq(1-\sqrt{\lambda})/(4L_s)$, define
\begin{equation}\label{cv:network-constants}
K_H:=\frac{16\alpha^2\lambda}{(1-\sqrt{\lambda})^2},\quad
K_U:=\frac{8n\alpha^4L_s^2\lambda}{(1-\sqrt{\lambda})^2(1-\lambda)}.
\end{equation}
Then
\begin{equation}\label{cv:network-sum}
\sum_{t=0}^{T-1}C_t\leq
\underbrace{\frac{4n\alpha^2\lambda\zeta_0^2}
{(1-\sqrt\lambda)(1-\lambda)}}_{\text{initial gradient heterogeneity}}
+K_H\sum_{t=0}^{T-1}H_t
+K_U\sum_{t=0}^{T-1}U_t.
\end{equation}
For $\lambda=0$, the same conclusions hold with zero network energy,
$K_H=K_U=0$, and $C_t=0$.
\end{lemma}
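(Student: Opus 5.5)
The proof writes Exact Diffusion in primal--dual form on the disagreement subspace and defines $\mathcal E_t$ as a weighted quadratic form in the eigenbasis of $W$. By Assumption~\ref{ass:network}, $W=Q\Lambda Q^\top$ with eigenvalues $1=\lambda_1>\lambda_2=\lambda\geq\cdots\geq\lambda_n\geq0$, and $P$ removes the first mode. Projecting \eqref{cv:second-order} with $P$ and substituting $V^t=\mathcal M^t+R^t$ gives, in each mode $k\geq2$, a scalar two-step recursion
\[
z^{t+1}=\lambda_k\bigl(2z^t-z^{t-1}\bigr)-\alpha\lambda_k\bigl(u^t-u^{t-1}\bigr),
\]
with $u^t$ the modal component of $V^t$. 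It is convenient to pass to the ED correction variable $Y^t$, defined by $X^{t+1}=W(X^t-\alpha V^t)-(I-W)^{1/2}Y^t$ and $Y^{t+1}=Y^t+(I-W)^{1/2}X^{t+1}$, and then to a shifted dual variable $\widetilde Y^t=Y^t+\alpha\,\text{(pseudo-inverse term)}\,P\mathcal M^t$. This shift absorbs the heterogeneous gradients at the common model, so that the fixed point carries no bias. In each mode, the pair (primal disagreement, shifted dual) then evolves through a $2\times2$ matrix $G_k$ whose eigenvalues are $\pm i\sqrt{\lambda_k}$ for ED, so its spectral radius is $\sqrt{\lambda_k}\leq\sqrt\lambda$. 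The driving terms are $PR^t$ and $P(\mathcal M^{t+1}-\mathcal M^t)$, the latter entering through the shift, with a factor $(1-\lambda_k)^{-1/2}$ coming from the pseudo-inverse.

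\emph{Step 1 (energy and coercivity).} For each mode, define $\mathcal E_t$ as a quadratic form $\|\cdot\|_{S_k}^2$ with $S_k\succ0$ chosen so that $G_k^\top S_kG_k\preceq\lambda_kS_k$. Since $G_k$ is $2\times2$ with known entries, $S_k$ can be written explicitly. I would normalize $S_k$ so that its primal block dominates $\tfrac14$, which gives \eqref{cv:energy-coercivity}. Then $\|G_kw\|_{S_k}\leq\sqrt{\lambda_k}\|w\|_{S_k}$.

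\emph{Step 2 (contraction with inputs).} Write the one-step update as $w^{t+1}=G_kw^t+\text{input}$ and apply Young's inequality in the form $\|a+b\|^2\leq\tfrac1{\theta}\|a\|^2+\tfrac1{1-\theta}\|b\|^2$ with $\theta=\sqrt\lambda$ (a rescaling of $\sqrt\lambda$). This yields the coefficient $\sqrt\lambda$ on $\mathcal E_t$ and the factor $1/(1-\sqrt\lambda)$ on the inputs. The extra factors $\alpha^2\lambda$ come from the $W$ multiplying $\alpha V$, and the factor $(1-\lambda)^{-1}$ comes from the pseudo-inverse in the shift. Together these give \eqref{cv:network-energy}.

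\emph{Step 3 (initial energy).} At $t=0$ we have $X^0=\overline X^0$, so the primal disagreement vanishes. The shifted dual is then $\alpha(I-W)^{\dagger/2}P\mathcal M^0$ (up to the $W$ factor), whose squared norm is at most $\alpha^2\lambda(1-\lambda)^{-1}\|P\mathcal G(\overline X^0)\|_F^2=\alpha^2\lambda n\zeta_0^2/(1-\lambda)$. This gives \eqref{cv:initial-energy}.

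\emph{Step 4 (summing).} Unroll \eqref{cv:network-energy} and sum over $t$, paying a factor $1/(1-\sqrt\lambda)$, and convert to $C_t$ via \eqref{cv:energy-coercivity}. For the residual, $\|PR^t\|_F^2\leq2\|E^t\|_F^2+2\|\mathcal G(X^t)-\mathcal G(\overline X^t)\|_F^2\leq2\|E^t\|_F^2+2L_s^2\|PX^t\|_F^2$. For the model movement, $\|P(\mathcal M^{t+1}-\mathcal M^t)\|_F^2\leq nL_s^2\alpha^2\|\bar v^t\|^2$ by \eqref{cv:average-dynamics}. The resulting $C_t$ terms carry coefficient at most $32\alpha^2\lambda L_s^2/(1-\sqrt\lambda)^2\leq2\lambda<1$ under $\alpha\leq(1-\sqrt\lambda)/(4L_s)$. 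I would have to track constants carefully to reach exactly $1/2$, since that value is what permits absorption into the left side. Absorbing then gives \eqref{cv:network-sum} with $K_H$ and $K_U$ as stated. The case $\lambda=0$ means $W=\mathbf1\mathbf1^\top/n$, so $PX^{t+1}=0$ directly. I expect the main obstacle to be Step 1: finding $S_k$ with exact contraction $\sqrt{\lambda_k}$, uniformly in $k$ and with coercivity constant $1/4$, while keeping the input constants as clean as the statement requires.
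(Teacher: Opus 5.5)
Your plan is the paper's proof written in different coordinates. The paper uses the per-mode state $(p_z^t,q_z^t)=(u_z^\top X^t,\,u_z^\top S_t/\sqrt{1-z})$, where the correction state $S_t$ starts at $\alpha W\mathcal M^0$ and absorbs $\alpha W(\mathcal M^{t+1}-\mathcal M^t)$. With your shift $\alpha(I-W)^{\dagger/2}WP\mathcal M^t$, one checks that $\tilde y_z^t=q_z^t+\sqrt{1-z}\,p_z^t$. In your variables the paper's weight $H_z$ becomes diagonal, which removes the obstacle you expect in Step 1. Let $s=\sqrt{1-z}$, and let $r,d$ be the modal components of $R^t$ and $\mathcal M^{t+1}-\mathcal M^t$. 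Your modal matrix is $G_z=\begin{bmatrix}z&-s\\ sz&z\end{bmatrix}$, and the forcing is $\alpha\,(-zr,\;-szr+zd/s)$. The weight $S_z=\operatorname{diag}(1/2,\,1/(2z))$ then does three things:
\begin{itemize}
\item it satisfies $G_z^\top S_zG_z=zS_z$ exactly;
\item it gives $\|PX^t\|_F^2\le2\mathcal E_t$;
\item it gives the forcing the same norm $\alpha^2(\tfrac z2r^2-zrd+\tfrac{z}{2(1-z)}d^2)\le\alpha^2(zr^2+\tfrac{z}{1-z}d^2)$ as in the paper.
\end{itemize}
Young's inequality with $\sqrt\lambda$ then gives \eqref{cv:network-energy}. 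The initial value $\tilde y_z^0=\alpha z\,u_z^\top\mathcal M^0/s$ gives \eqref{cv:initial-energy}.

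Three small corrections. First, the eigenvalues of $G_z$ are $z\pm i\sqrt{z(1-z)}$ (its trace is $2z$), not $\pm i\sqrt z$; only the modulus $\sqrt z$ matters. Second, $W$ is only positive semidefinite, so zero eigenvalues can occur even when $\lambda>0$ (for example, the lazy ring with even $n$). Those modes must be left out of the energy, since the weight $1/(2z)$ blows up. This is harmless: they have zero initial shifted dual and zero forcing, so they stay identically zero. Third, for a non-diagonal weight, coercivity requires $S_z\succeq\tfrac14e_1e_1^\top$, not just a primal block of at least $\tfrac14$. With the diagonal weight this point does not arise.

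The one step that fails as written is the absorption in Step 4. Summing gives $(1-\sqrt\lambda)\sum_t\E\mathcal E_t\le\mathcal E_0+\sum_tF_t$, where $F_t$ is the expected forcing in \eqref{cv:network-energy}. Multiply by the coercivity factor $4$ and use $\E\|PR^t\|_F^2\le2H_t+2L_s^2C_t$. The coefficient of $\sum_tC_t$ is then $8\alpha^2\lambda L_s^2/(1-\sqrt\lambda)^2\le\lambda/2\le1/2$, not $32\alpha^2\lambda L_s^2/(1-\sqrt\lambda)^2$. Your bound $2\lambda<1$ is false whenever $\lambda\ge1/2$, so with your constant the disagreement term could not be absorbed. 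With the correct count, moving the $C$-term to the left at most doubles the remaining coefficients. This gives exactly $8\mathcal E_0/(1-\sqrt\lambda)$, $K_H$ and $K_U$. Finally, \eqref{cv:initial-energy} turns the first term into the heterogeneity term of \eqref{cv:network-sum}.
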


\begin{proof}
\emph{Step 1: isolate the corrected network state.}
Use the common-model gradients $\mathcal M^t$ and residuals $R^t$
defined above. Set $S_0=\alpha W\mathcal M^0$ and consider
\begin{align}
X^{t+1}&=(2W-I_n)X^t-S_t-\alpha WR^t,
\label{cv:augmented-X}\\
S_{t+1}&=S_t+(I_n-W)X^t
+\alpha W(\mathcal M^{t+1}-\mathcal M^t).
\label{cv:augmented-D}
\end{align}
At initialization, the first equation gives
$X^1=X^0-\alpha WV^0$, since $WX^0=X^0$.
Subtracting successive instances of the first equation and using
the second recovers~\eqref{cv:second-order}.
Thus this representation is exactly the initialized algorithm,
not an additional update rule.

\emph{Step 2: construct the energy one network mode at a time.}
If $\lambda=0$, mixing removes all model disagreement and the
claim follows with zero energy. Otherwise, let $u_z$ be an
orthonormal disagreement eigenvector of $W$ with positive eigenvalue $z$.
Zero eigenmodes have zero model and correction coordinates by
initialization and the recursions above, so they need not be inverted.
Write
\[
p_z^t=u_z^\top X^t,\quad
q_z^t=\frac{u_z^\top S_t}{\sqrt{1-z}},\quad
Z_z^t=\begin{bmatrix}p_z^t\\q_z^t\end{bmatrix}.
\]
Projecting the coupled recursion onto this eigenvector gives
\[
Z_z^{t+1}
=A_zZ_z^t+\alpha
\begin{bmatrix}
-z u_z^\top R^t\\
\displaystyle\frac{z}{\sqrt{1-z}}u_z^\top(\mathcal M^{t+1}-\mathcal M^t)
\end{bmatrix},
\quad
A_z=\begin{bmatrix}
2z-1&-\sqrt{1-z}\\
\sqrt{1-z}&1
\end{bmatrix}.
\]
Define
\[
H_z=\frac1{2z}\begin{bmatrix}
1&\sqrt{1-z}\\
\sqrt{1-z}&1
\end{bmatrix},\quad
\mathcal E_t=\sum_{z>0}\|Z_z^t\|_{H_z}^2,
\]
where $\|Z\|_{H_z}^2=\operatorname{tr}(Z^\top H_zZ)$
and the sum includes all positive disagreement modes.
Direct multiplication gives $A_z^\top H_zA_z=zH_z$.
The smaller eigenvalue of $H_z$ is
$[2(1+\sqrt{1-z})]^{-1}\geq1/4$.
Consequently,
\[
\|A_zZ_z^t\|_{H_z}^2=z\|Z_z^t\|_{H_z}^2,\quad
\|PX^t\|_F^2\leq4\mathcal E_t.
\]
This proves~\eqref{cv:energy-coercivity}.

\emph{Step 3: bound the forcing and initial energy.}
For one model coordinate, write $r,d$ for the corresponding entries
of $u_z^\top R^t$ and $u_z^\top(\mathcal M^{t+1}-\mathcal M^t)$.
The forcing norm satisfies
\[
\alpha^2
\left\|\begin{bmatrix}-zr\\zd/\sqrt{1-z}\end{bmatrix}\right\|_{H_z}^2
=\alpha^2\left(\frac z2r^2-zrd+\frac{z}{2(1-z)}d^2\right)
\leq\alpha^2\left(zr^2+\frac{z}{1-z}d^2\right).
\]
Young's inequality, with contraction factor $\sqrt\lambda$,
therefore yields after summing all modes
\[
\mathcal E_{t+1}
\leq\sqrt\lambda\,\mathcal E_t
+\frac{\alpha^2\lambda}{1-\sqrt\lambda}\|PR^t\|_F^2
+\frac{\alpha^2\lambda}{(1-\sqrt\lambda)(1-\lambda)}
\|P(\mathcal M^{t+1}-\mathcal M^t)\|_F^2.
\]
This is~\eqref{cv:network-energy}.
Initially $p_z^0=0$ and
$q_z^0=\alpha z u_z^\top\mathcal M^0/\sqrt{1-z}$, hence
\[
\mathcal E_0
=\frac{\alpha^2}{2}\sum_{z>0}\frac{z}{1-z}
\|u_z^\top\mathcal M^0\|^2
\leq\frac{\alpha^2\lambda}{2(1-\lambda)}
\|P\mathcal M^0\|_F^2
=\frac{n\alpha^2\lambda}{2(1-\lambda)}\zeta_0^2.
\]
This establishes~\eqref{cv:initial-energy}; the stochastic
initial error remains in $R^0$, separate from the heterogeneity energy.

\emph{Step 4: express the forcing through the error moments.}
Since $R^t=E^t+\mathcal G(X^t)-\mathcal G(\overline X^t)$,
smoothness and projection nonexpansiveness give
\begin{align}
\E\|PR^t\|_F^2&\leq2H_t+2L_s^2C_t,
\label{cv:R-force}\\
\E\|P(\mathcal M^{t+1}-\mathcal M^t)\|_F^2
&\leq nL_s^2\E\|\bar x^{t+1}-\bar x^t\|^2
=n\alpha^2L_s^2U_t.
\label{cv:N-force}
\end{align}
The last identity follows from~\eqref{cv:average-dynamics}.

\emph{Step 5: sum the energy and absorb disagreement.}
Taking expectations in the energy recursion, summing, and dropping
the negative terminal energy gives
\[
\begin{aligned}
(1-\sqrt\lambda)\sum_t\E\mathcal E_t
\leq{}&\mathcal E_0
+\frac{2\alpha^2\lambda}{1-\sqrt\lambda}\sum_tH_t
+\frac{2\alpha^2\lambda L_s^2}{1-\sqrt\lambda}\sum_tC_t\\
&+\frac{n\alpha^4L_s^2\lambda}
{(1-\sqrt\lambda)(1-\lambda)}\sum_tU_t.
\end{aligned}
\]
Here and below $\sum_t$ runs over the $T$ updates.
Using $C_t\leq4\E\mathcal E_t$, we obtain
\begin{equation}\label{cv:network-preabsorption}
\begin{aligned}
\sum_tC_t\leq{}&\frac{4\mathcal E_0}{1-\sqrt\lambda}
+\frac{8\alpha^2\lambda}{(1-\sqrt\lambda)^2}\sum_tH_t
+\frac{8\alpha^2L_s^2\lambda}{(1-\sqrt\lambda)^2}\sum_tC_t\\
&+\frac{4n\alpha^4L_s^2\lambda}
{(1-\sqrt\lambda)^2(1-\lambda)}\sum_tU_t.
\end{aligned}
\end{equation}
The stepsize condition makes the coefficient of $\sum_tC_t$
on the right at most $1/2$. Moving that term to the left and
using~\eqref{cv:initial-energy} gives~\eqref{cv:network-sum}
with the stated $K_H,K_U$.
\end{proof}

The coefficient $K_H$ quantifies how ED propagates local estimation
errors into disagreement, while $K_U$ accounts for reference-gradient
movement.
Together with the exact-gradient identity, this separates ED's
bias-correction role from STORM's stochastic estimation role.
The network energy controls this interaction without removing the
error feedback identified in the error recursions.

The bounds on $\sum_t h_t$ and $\sum_t H_t$ depend on
$\sum_t\E\|X^t-X^{t-1}\|_F^2$, which is bounded through
$C_t,U_t$, while~\eqref{cv:network-sum} depends on $\sum_tH_t$.
The next section closes these inequalities by
absorbing disagreement first and the remaining direction term second.

\section{Finite-time convergence}\label{app:closure}
The error bounds~\eqref{cv:h-sum}--\eqref{cv:H-sum} and the
disagreement bound~\eqref{cv:network-sum} depend on each other
through $\sum_t\E\|X^t-X^{t-1}\|_F^2$.
We close them in two stages, matching the two negative margins
available in the proof.
First, the network inequality absorbs its own disagreement feedback.
Second, the negative direction term in~\eqref{cv:descent} absorbs
the remaining average movement.
Neither stage requires recursive directions to be independent of
the current iterates.

For this calculation, $Q$ collects the disagreement terms entering
average descent.
The coefficients $A_C$ and $A_U$ collect feedback from disagreement
and average movement in the network estimate.
The remaining initialization and variance contributions are grouped
in $B_{\rm net}(T)$:

\begin{align}
Q&:=\frac{L_s^2}{n}+\frac{8L_s^2}{bn^2\gamma},\quad
A_C:=\frac{8K_HL_s^2}{b\gamma},\quad
A_U:=\frac{2K_HnL_s^2\alpha^2}{b\gamma}+K_U,
\label{cv:closure-coefficients}\\
B_{\rm net}(T)&:=
\frac{4n\alpha^2\lambda\zeta_0^2}{(1-\sqrt\lambda)(1-\lambda)}
+K_H\left(\frac{H_0}{\gamma}
+2\gamma n\Sigma^2T\right).
\label{cv:network-budget}
\end{align}
\begin{theorem}[Finite-time stationarity, direction, and consensus bounds]
\label{cv:finite-theorem}
Under Assumptions~\ref{ass:objectives}--\ref{ass:clipping}, suppose
\begin{equation}\label{cv:stability}
\alpha\leq\min\left\{\frac1{2L_s},\frac{1-\sqrt{\lambda}}{4L_s}\right\},\quad
A_C\leq\frac12,\quad
\frac{2L_s^2\alpha^2}{bn\gamma}+2QA_U\leq\frac18.
\end{equation}
Then
\begin{align}
\sum_{t=0}^{T-1}C_t&\leq2B_{\rm net}(T)+2A_U\sum_{t=0}^{T-1}U_t,
\label{cv:closed-consensus}\\
\frac1T\sum_{t=0}^{T-1}\mathbb E\|\nabla f(\bar x^t)\|^2
&\leq\frac{2\Delta_f^0}{\alpha T}+\frac{2h_0}{\gamma T}
+\frac{4\gamma\Sigma^2}{n}+\frac{4QB_{\rm net}(T)}{T},
\label{cv:finite-bound}\\
\frac{\sum_{t=0}^{T-1}U_t}{T}
&\leq\frac{8\Delta_f^0}{\alpha T}+\frac{8h_0}{\gamma T}
+\frac{16\gamma\Sigma^2}{n}+\frac{16QB_{\rm net}(T)}{T}.
\label{cv:finite-direction}
\end{align}
Thus a fully explicit finite-time consensus bound is
\begin{equation}\label{cv:finite-consensus}
\frac{\sum_{t=0}^{T-1}C_t}{nT}\leq\frac{2B_{\rm net}(T)}{nT}
+\frac{2A_U}{n}\left[
\frac{8\Delta_f^0}{\alpha T}+\frac{8h_0}{\gamma T}
+\frac{16\gamma\Sigma^2}{n}+\frac{16QB_{\rm net}(T)}{T}\right].
\end{equation}

\end{theorem}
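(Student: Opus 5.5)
We close the coupled inequalities in two stages: first substitute the error bound into the network estimate and absorb disagreement, then substitute into the descent inequality and absorb the average-direction term.

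\emph{Stage 1: closed consensus bound.} Start from \eqref{cv:network-sum} and insert \eqref{cv:H-sum} for $\sum_tH_t$. The direct sampling and DP terms combine as $2\gamma n\sigma_{\rm sgd}^2T/b+\gamma nd\nu_{\rm dp}^2T\leq2\gamma n\Sigma^2T$ by \eqref{cv:unified-variance}. This gives
\[
\sum_tC_t\leq B_{\rm net}(T)+A_C\sum_tC_t+A_U\sum_tU_t,
\]
with $A_C,A_U$ as in \eqref{cv:closure-coefficients}. The $K_H\cdot 8L_s^2/(b\gamma)$ term gives $A_C$, and $K_H\cdot2nL_s^2\alpha^2/(b\gamma)+K_U$ gives $A_U$. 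The stepsize condition $\alpha\leq(1-\sqrt\lambda)/(4L_s)$ licenses Lemma~\ref{cv:lemma-network}. Since $A_C\leq1/2$, we move $A_C\sum_tC_t$ to the left and multiply by two, which yields \eqref{cv:closed-consensus}.

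\emph{Stage 2: descent and absorption.} Sum \eqref{cv:descent} over $t=0,\dots,T-1$ (valid since $\alpha\leq1/(2L_s)$). Telescoping with $f\geq\inf f$ gives
\[
\frac\alpha2\sum_t\E\|\nabla f(\bar x^t)\|^2+\frac\alpha4\sum_tU_t\leq\Delta_f^0+\alpha\sum_th_t+\frac{\alpha L_s^2}{n}\sum_tC_t.
\]
Insert \eqref{cv:h-sum}, again bounding its direct terms by $\gamma\Sigma^2T/n$ (up to the factor $2$ on $\sigma_{\rm sgd}^2$, so $2\gamma\Sigma^2T/n$). The $C$-terms then collect exactly into $Q\sum_tC_t$. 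The $U$-term from \eqref{cv:h-sum} has coefficient $2L_s^2\alpha^2/(bn\gamma)$. Now substitute \eqref{cv:closed-consensus}, which turns $Q\sum_tC_t$ into $2QB_{\rm net}(T)+2QA_U\sum_tU_t$. By the third condition in \eqref{cv:stability}, the total coefficient on $\alpha\sum_tU_t$ on the right is at most $1/8$. Absorbing it into $\frac\alpha4\sum_tU_t$ leaves $\frac\alpha8\sum_tU_t$ on the left, so
\[
\frac\alpha2\sum_t\E\|\nabla f\|^2+\frac\alpha8\sum_tU_t\leq\Delta_f^0+\frac{\alpha h_0}{\gamma}+2\alpha\gamma\Sigma^2T/n+2\alpha QB_{\rm net}(T).
\]
Dropping the $U$ term and multiplying by $2/(\alpha T)$ gives \eqref{cv:finite-bound}. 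Dropping the gradient term and multiplying by $8/(\alpha T)$ gives \eqref{cv:finite-direction}. Finally, substituting \eqref{cv:finite-direction} into \eqref{cv:closed-consensus} and dividing by $nT$ gives \eqref{cv:finite-consensus}.

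\emph{Expected obstacle.} The main risk is constant bookkeeping in Stage 2, in two places. First, the DP term in \eqref{cv:h-sum} carries coefficient $1$ rather than $2$, so domination by $2\gamma\Sigma^2T/n$ must be checked. Second, $h_0$ at the $t=0$ endpoint is counted only once in $\sum_th_t$, and the initialization uses $b_0$ and $\nu_0\leq\nu_{\rm dp}$. We also need the $U$-sum coming from \eqref{cv:movement} (over $t\leq T-1$) to match the left-hand sum, which it does. If the resulting constants come out slightly larger than stated, we would rebalance the polarization step with a sharper Young split, since only the absorption margin $1/8$ is structurally essential.
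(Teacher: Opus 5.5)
Your proposal is correct and follows the paper's proof essentially step for step. You insert \eqref{cv:H-sum} into \eqref{cv:network-sum} and divide by $1-A_C\geq 1/2$; you then sum the descent inequality, insert \eqref{cv:h-sum} and the closed consensus bound, absorb the $U$-terms with the $1/8$ margin, and read off the three bounds. The bookkeeping concerns you flag resolve as you expect: the DP coefficient is $1\leq 2$, $h_0$ already appears in \eqref{cv:h-sum} as $h_0/\gamma$, and the $U$-sum ranges match. No rebalancing of the polarization step is needed.
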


\begin{proof}
\emph{Step 1: close the accumulated disagreement.}
Since $2\sigma_{\rm sgd}^2/b+d\nu_{\rm dp}^2\leq2\Sigma^2$,
\eqref{cv:H-sum} implies
\[
\sum_{t=0}^{T-1}H_t\leq\frac{H_0}{\gamma}+2\gamma n\Sigma^2T
+\frac{8L_s^2}{b\gamma}\sum_{t=0}^{T-1}C_t
+\frac{2nL_s^2\alpha^2}{b\gamma}\sum_{t=0}^{T-1}U_t.
\]
Insert this inequality into the network sum~\eqref{cv:network-sum}.
Collecting the disagreement and direction terms gives the
coefficients $A_C$ and $A_U$, respectively.
The remaining contributions, including initial gradient heterogeneity,
are precisely $B_{\rm net}(T)$.
The resulting inequality is
\[
\sum_{t=0}^{T-1}C_t\leq B_{\rm net}(T)+A_C\sum_{t=0}^{T-1}C_t+A_U\sum_{t=0}^{T-1}U_t.
\]
Since $A_C\leq1/2$, division by $1-A_C$ gives
\eqref{cv:closed-consensus}.

\emph{Step 2: substitute into average descent and absorb movement.}
Sum\,\eqref{cv:descent} over all $T$ updates and use the lower bound
$\mathbb E f(\bar x^T)\geq\inf_x f(x)$ to get
\[
\frac\alpha2\sum_{t=0}^{T-1}\mathbb E\|\nabla f(\bar x^t)\|^2
+\frac\alpha4\sum_{t=0}^{T-1}U_t
\leq\Delta_f^0+\alpha \sum_{t=0}^{T-1}h_t+\frac{\alpha L_s^2}{n}\sum_{t=0}^{T-1}C_t.
\]
Applying\,\eqref{cv:h-sum} and the same variance enlargement gives
\begin{align}
\frac\alpha2\sum_t\mathbb E\|\nabla f(\bar x^t)\|^2
+\frac\alpha4\sum_{t=0}^{T-1}U_t
&\leq\Delta_f^0+\frac{\alpha h_0}{\gamma}
+\frac{2\alpha\gamma\Sigma^2T}{n}
+\alpha Q \sum_{t=0}^{T-1}C_t
+\frac{2\alpha L_s^2\alpha^2}{bn\gamma}\sum_{t=0}^{T-1}U_t.
\label{cv:descent-preclosure}
\end{align}
All sums in this display range from zero to $T-1$.
Substitute\,\eqref{cv:closed-consensus}. The total right-hand
coefficient of $\sum_{t=0}^{T-1}U_t$ is exactly
\[
\alpha\left(\frac{2L_s^2\alpha^2}{bn\gamma}+2QA_U\right).
\]
The final condition in\,\eqref{cv:stability} therefore leaves at least
$\alpha \sum_{t=0}^{T-1}U_t/8$ on the left, proving the stronger joint inequality
\begin{equation}\label{cv:joint-bound}
\frac\alpha2\sum_t\mathbb E\|\nabla f(\bar x^t)\|^2
+\frac\alpha8\sum_{t=0}^{T-1}U_t
\leq\Delta_f^0+\frac{\alpha h_0}{\gamma}
+\frac{2\alpha\gamma\Sigma^2T}{n}+2\alpha QB_{\rm net}(T).
\end{equation}
\emph{Step 3: extract the three finite-time guarantees.}
Discarding the nonnegative direction sum and dividing by $\alpha T/2$
gives\,\eqref{cv:finite-bound}. Discarding the gradient sum instead and
dividing by $\alpha T/8$ gives\,\eqref{cv:finite-direction}. Inserting
the latter into\,\eqref{cv:closed-consensus} gives
\eqref{cv:finite-consensus}.
\end{proof}

\section{Parameter choices and rates}\label{app:schedules}

The finite-time bound becomes informative once a parameter schedule
satisfies every feedback condition uniformly over the horizon.
We first verify such a schedule and then expand its stationarity
and consensus terms.
This intermediate specialization sets $\nu_{\rm dp}=0$ to isolate the
rate calculation.
The private extension is derived at the end of this section.

\begin{theorem}[Stationarity and consensus rates]
\label{cv:scheduled-theorem}
Under Assumptions~\ref{ass:objectives}--\ref{ass:clipping}, fix $b,n$.
Set $\nu_{\rm dp}=0$ and choose
\begin{equation}\label{cv:scheduled-batch}
b_0=\lceil b(T+1)^{1/3}\rceil.
\end{equation}
For finite-slot Poisson sampling, restrict $T$ to values with $b_0\leq N$.
Fix $c_\gamma\in(0,1]$ and choose a sufficiently small
$c_\alpha>0$, depending only on $c_\gamma,b,n$. Set
\begin{equation}\label{cv:schedule}
\alpha=\frac{c_\alpha(1-\sqrt{\lambda})}{L_s(T+1)^{1/3}},\quad
\gamma=\frac{c_\gamma}{(T+1)^{2/3}}.
\end{equation}
These choices satisfy\,\eqref{cv:stability}. Collect the variance,
optimization, and initial-heterogeneity terms as
\begin{equation}\label{cv:rate-expression}
R_T:=
\frac{L_s\Delta_f^0}{(1-\sqrt{\lambda}) T^{2/3}}
+\frac{\Sigma^2}{nT^{2/3}}
+\frac{\lambda\Sigma^2}{T^{4/3}}
+\frac{\lambda\Sigma^2}{bnT^{2/3}}
+\frac{\lambda\zeta_0^2}{1+\sqrt\lambda}
 \left[\frac1{T^{5/3}}+\frac{1}{bnT}\right].
\end{equation}
For sufficiently large $T$, the rates are
\begin{align}
\frac1T\sum_{t=0}^{T-1}\mathbb E\|\nabla f(\bar x^t)\|^2
&=\mathcal O(R_T),
\label{cv:rate-stationarity}\\
\frac1{nT}\sum_{t=0}^{T-1}C_t
&=\mathcal O\Bigg(
\frac{\lambda\Sigma^2}{L_s^2T^{4/3}}
+\frac{\lambda\zeta_0^2}{L_s^2(1+\sqrt\lambda)T^{5/3}}
\nonumber\\
&\quad+\left[
\frac{\lambda(1-\sqrt{\lambda})^2}{bL_s^2T^{2/3}}
+\frac{\lambda(1-\sqrt{\lambda})}{L_s^2(1+\sqrt\lambda)T^{4/3}}
\right]R_T\Bigg).
\label{cv:rate-consensus}
\end{align}
The hidden constants may depend on $c_\alpha,c_\gamma,b,n$
and are independent of~$T$, $\lambda$, the initial function gap,
$\zeta_0^2$, and both variance levels.
The displayed factors therefore retain the dependence on topology,
initialization, and stochastic error.
\end{theorem}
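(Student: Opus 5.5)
The plan is to verify that the schedule \eqref{cv:schedule}--\eqref{cv:scheduled-batch} satisfies each condition in \eqref{cv:stability}, and then substitute it into the finite-time bounds \eqref{cv:finite-bound} and \eqref{cv:finite-consensus} of Theorem~\ref{cv:finite-theorem}. Throughout write $\tau:=(T+1)^{1/3}$, so $\alpha=c_\alpha(1-\sqrt\lambda)/(L_s\tau)$ and $\gamma=c_\gamma/\tau^2$. Note that $\alpha^2/\gamma=c_\alpha^2(1-\sqrt\lambda)^2/(c_\gamma L_s^2)$ is independent of $T$; this cancellation drives every step.

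\emph{Step 1: stability.} The stepsize conditions hold once $c_\alpha\le 1/4$, because $1-\sqrt\lambda\le1$.

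For $A_C$, use \eqref{cv:network-constants}. This gives $K_H=16c_\alpha^2\lambda/(L_s^2\tau^2)$, hence
\[
A_C=\frac{8K_HL_s^2}{b\gamma}=\frac{128c_\alpha^2\lambda}{bc_\gamma}.
\]
This is at most $1/2$ for $c_\alpha^2\le bc_\gamma/256$.

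For the last condition, $2L_s^2\alpha^2/(bn\gamma)=2c_\alpha^2(1-\sqrt\lambda)^2/(bnc_\gamma)$, which is small for small $c_\alpha$.

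Next, $Q\le L_s^2/n+8L_s^2\tau^2/(bn^2c_\gamma)$, which grows like $\tau^2$. I must therefore check that $A_U=O(\tau^{-2})$. Indeed:
\begin{itemize}
\item $K_HnL_s^2\alpha^2/(b\gamma)=O(c_\alpha^4\lambda(1-\sqrt\lambda)^2 n/(bc_\gamma L_s^2\tau^2))$;
\item $K_U=8nc_\alpha^4(1-\sqrt\lambda)^2\lambda/(L_s^2\tau^4(1-\lambda))$, and $(1-\sqrt\lambda)^2/(1-\lambda)=(1-\sqrt\lambda)/(1+\sqrt\lambda)\le1$.
\end{itemize}
Hence $QA_U=O(c_\alpha^4)$, with constants depending on $c_\gamma,b,n$. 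This term is also small for small $c_\alpha$.

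I expect this $Q\cdot A_U$ balance to be the main obstacle. The $1/\gamma$ blowup in $Q$ has to be exactly compensated by $\alpha^2$ factors, and the bookkeeping must keep every constant free of $\lambda$ and $T$.

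\emph{Step 2: stationarity.} Bound each term of \eqref{cv:finite-bound} in turn.
\begin{itemize}
\item $2\Delta_f^0/(\alpha T)=O(L_s\Delta_f^0\tau/((1-\sqrt\lambda)T))=O(L_s\Delta_f^0/((1-\sqrt\lambda)T^{2/3}))$.
\item By \eqref{cv:initial-errors} with $\nu_{\rm dp}=0$ and $b_0\ge b\tau$, $h_0\le\sigma_{\rm sgd}^2/(nb\tau)\le\Sigma^2/(n\tau)$. Hence $2h_0/(\gamma T)=O(\Sigma^2\tau/(nT))=O(\Sigma^2/(nT^{2/3}))$.
\item $4\gamma\Sigma^2/n=O(\Sigma^2/(nT^{2/3}))$.
\item For $4QB_{\rm net}/T$, expand $B_{\rm net}$ from \eqref{cv:network-budget}, using $H_0\le n\Sigma^2/\tau$ and $\gamma nT\Sigma^2=O(n\Sigma^2\tau)$. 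This gives
\[
K_H(H_0/\gamma+2\gamma n\Sigma^2T)=O(\lambda n\Sigma^2/(L_s^2\tau)),
\]
together with the heterogeneity piece $4n\alpha^2\lambda\zeta_0^2/((1-\sqrt\lambda)(1-\lambda))=O(n\lambda\zeta_0^2/(L_s^2\tau^2(1+\sqrt\lambda)))$.
\end{itemize}
Multiplying by $Q/T$ and splitting $Q$ into its $L_s^2/n$ part and its $\tau^2/(bn^2)$ part yields four contributions:
\begin{itemize}
\item $\lambda\Sigma^2/T^{4/3}$ and $\lambda\Sigma^2/(bnT^{2/3})$;
\item $\lambda\zeta_0^2/((1+\sqrt\lambda)T^{5/3})$ and $\lambda\zeta_0^2/((1+\sqrt\lambda)bnT)$.
\end{itemize}
These are exactly the remaining terms of $R_T$. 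I use $T+1\le 2T$ to replace $\tau$ by $T^{1/3}$.

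\emph{Step 3: consensus.} Insert the same estimates into \eqref{cv:finite-consensus}. The bracket there is precisely the quantity bounded in Step 2 (up to a factor $4$), so it is $O(R_T)$.

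The term $2B_{\rm net}/(nT)$ gives
\[
\frac{\lambda\Sigma^2}{L_s^2T^{4/3}}+\frac{\lambda\zeta_0^2}{L_s^2(1+\sqrt\lambda)T^{5/3}}.
\]
The factor $2A_U/n$ splits into two pieces:
\begin{itemize}
\item the $K_H$ part, $O(\lambda(1-\sqrt\lambda)^2/(bL_s^2\tau^2))$;
\item the $K_U$ part, $O(\lambda(1-\sqrt\lambda)/(L_s^2(1+\sqrt\lambda)\tau^4))$.
\end{itemize}
This matches \eqref{cv:rate-consensus}. "Sufficiently large $T$" is used only to absorb constants and to compare $\tau$ with $T^{1/3}$.
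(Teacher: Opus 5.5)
Your proposal is correct and follows essentially the same route as the paper. You verify \eqref{cv:stability} by the same explicit computation of $K_H$, $K_U$, $A_C$, and $QA_U$, using that $\alpha^2/\gamma$ is independent of $T$, and then substitute term by term into \eqref{cv:finite-bound} and \eqref{cv:finite-consensus}, splitting $Q$ into its two components and retaining the factors $(1-\sqrt\lambda)$ and $1+\sqrt\lambda$ exactly as the paper does.
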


\begin{proof}
\emph{Step 1: verify every feasibility inequality.}
Direct substitution in
\eqref{cv:network-constants} and~\eqref{cv:closure-coefficients} gives
\begin{align}
K_H&=\frac{16c_\alpha^2\lambda}{L_s^2(T+1)^{2/3}},\quad
K_U=\frac{8nc_\alpha^4\lambda(1-\sqrt{\lambda})}{L_s^2(1+\sqrt{\lambda})(T+1)^{4/3}},
\label{cv:scheduled-K}\\
A_C&=\frac{128\lambda c_\alpha^2}{bc_\gamma},\quad
\frac{2L_s^2\alpha^2}{bn\gamma}
=\frac{2c_\alpha^2(1-\sqrt{\lambda})^2}{bnc_\gamma},
\label{cv:scheduled-small-gains}\\
QA_U
&=\frac{32\lambda c_\alpha^4(1-\sqrt{\lambda})^2}{bc_\gamma (T+1)^{2/3}}
+\frac{8\lambda c_\alpha^4(1-\sqrt{\lambda})}{(1+\sqrt{\lambda})(T+1)^{4/3}}
\nonumber\\ &\quad+\frac{256\lambda c_\alpha^4(1-\sqrt{\lambda})^2}{b^2nc_\gamma^2}
+\frac{64\lambda c_\alpha^4(1-\sqrt{\lambda})}{bnc_\gamma(1+\sqrt{\lambda})(T+1)^{2/3}}.
\label{cv:scheduled-feedback}
\end{align}
Bounding the topology and horizon factors by one, it suffices
to choose $c_\alpha\leq1/4$ so that
\begin{gather*}
\frac{128c_\alpha^2}{bc_\gamma}\leq\frac12,\\
\frac{2}{bnc_\gamma}c_\alpha^2
+2\left[\frac{32}{bc_\gamma}+8
+\frac{256}{b^2nc_\gamma^2}
+\frac{64}{bnc_\gamma}\right]c_\alpha^4
\leq\frac18.
\end{gather*}
Such a positive choice exists by continuity at zero and is independent
of $T$ and $\lambda$. In addition,
$\alpha\leq(1-\sqrt{\lambda})/(4L_s)\leq1/(4L_s)<1/(2L_s)$.
Thus the prescribed schedule satisfies~\eqref{cv:stability}.

\emph{Step 2: substitute into the stationarity bound term by term.}
The same substitution, with $h_0,H_0$ still retained, yields
\begin{align}
\frac1T\sum_t\mathbb E\|\nabla f(\bar x^t)\|^2
&\leq\frac{2L_s\Delta_f^0(T+1)^{1/3}}{c_\alpha(1-\sqrt{\lambda}) T}
+\frac{2h_0(T+1)^{2/3}}{c_\gamma T}
+\frac{4c_\gamma\Sigma^2}{n(T+1)^{2/3}}\nonumber\\
&\quad+4\left(\frac{L_s^2}{n}
+\frac{8L_s^2(T+1)^{2/3}}{bn^2c_\gamma}\right)
\nonumber\\
&\quad\times\left(\frac{4nc_\alpha^2\lambda\zeta_0^2}
{L_s^2(1+\sqrt\lambda)T(T+1)^{2/3}}
+\frac{16c_\alpha^2\lambda H_0}{L_s^2c_\gamma T}
+\frac{32c_\alpha^2c_\gamma n\lambda\Sigma^2}{L_s^2(T+1)^{4/3}}\right).
\label{cv:scheduled-exact-bound}
\end{align}
Here the second parenthesis is exactly $B_{\rm net}(T)/T$, because
\begin{equation}\label{cv:scheduled-network-budget}
\frac{B_{\rm net}(T)}T
=\frac{4nc_\alpha^2\lambda\zeta_0^2}
{L_s^2(1+\sqrt\lambda)T(T+1)^{2/3}}
+\frac{16c_\alpha^2\lambda H_0}{L_s^2c_\gamma T}
+\frac{32c_\alpha^2c_\gamma n\lambda\Sigma^2}{L_s^2(T+1)^{4/3}}.
\end{equation}
Under $\nu_{\rm dp}=0$,\,\eqref{cv:initial-errors} and
\eqref{cv:scheduled-batch} give
\[
h_0=\mathcal O\!\left(\frac{\Sigma^2}{n(T+1)^{1/3}}\right),\quad
H_0=\mathcal O\!\left(\frac{n\Sigma^2}{(T+1)^{1/3}}\right),
\]
using $\sigma_{\rm sgd}^2/b\leq\Sigma^2$.
Using $T\leq T+1\leq2T$, the first three terms of
\eqref{cv:scheduled-exact-bound} are therefore bounded, respectively,
by constant multiples of
\[
\frac{L_s\Delta_f^0}{(1-\sqrt{\lambda}) T^{2/3}},\quad
\frac{\Sigma^2}{nT^{2/3}},\quad
\frac{\Sigma^2}{nT^{2/3}}.
\]
The two variance terms of\,\eqref{cv:scheduled-network-budget} are each bounded by
a constant multiple of $n\lambda\Sigma^2/(L_s^2T^{4/3})$.
Multiplying that estimate by the two terms of
$4Q=4L_s^2/n+32L_s^2(T+1)^{2/3}/(bn^2c_\gamma)$ yields, respectively,
\[
\mathcal O\left(\frac{\lambda\Sigma^2}{T^{4/3}}\right),\quad
\mathcal O\left(\frac{\lambda\Sigma^2}{bnT^{2/3}}\right).
\]
The remaining initial-heterogeneity term in
\eqref{cv:scheduled-network-budget}, multiplied by $4Q$, is exactly
\[
\underbrace{\frac{16c_\alpha^2\lambda\zeta_0^2}
{(1+\sqrt\lambda)T(T+1)^{2/3}}}_{\text{initial energy in average descent}}
+\underbrace{\frac{128c_\alpha^2\lambda\zeta_0^2}
{bn c_\gamma(1+\sqrt\lambda)T}}_{\text{initial energy through recursive differences}}.
\]
This gives the last term of~\eqref{cv:rate-expression} and accounts
for every contribution to~\eqref{cv:rate-stationarity}.
For fixed initial heterogeneity, these contributions decay as
$O(T^{-1})$ or faster.
Allowing heterogeneous initial gradients therefore preserves the
leading $T^{-2/3}$ order under the stated schedule.

\emph{Step 3: substitute into the consensus bound.}
For consensus, direct substitution gives
\begin{equation}\label{cv:scheduled-AU}
\frac{A_U}{n}
=\frac{32\lambda c_\alpha^4(1-\sqrt{\lambda})^2}{bc_\gamma L_s^2(T+1)^{2/3}}
+\frac{8\lambda c_\alpha^4(1-\sqrt{\lambda})}{L_s^2(1+\sqrt{\lambda})(T+1)^{4/3}}.
\end{equation}
The right-hand side of\,\eqref{cv:finite-direction} is exactly four
times that of\,\eqref{cv:finite-bound}. The term-by-term estimates just
proved hence also give $T^{-1}\sum_{t=0}^{T-1}U_t=\mathcal O(R_T)$.
Divide\,\eqref{cv:closed-consensus} by $nT$; its first term is
$\mathcal O(\lambda\Sigma^2/(L_s^2T^{4/3})
+\lambda\zeta_0^2/[L_s^2(1+\sqrt\lambda)T^{5/3}])$ by
\eqref{cv:scheduled-network-budget}, while its second is
$2(A_U/n)\,\mathcal O(R_T)$. Equation\,\eqref{cv:scheduled-AU}
now gives\,\eqref{cv:rate-consensus} with all factors of $(1-\sqrt{\lambda})$ and
$1+\sqrt{\lambda}$ retained. The derivation uses only constants with the dependence
stated in the theorem. At $\lambda=0$, every displayed consensus term is
zero, consistently with Lemma~\ref{cv:lemma-network}.
\end{proof}

With $\nu_{\rm dp}=0$, uniform independent selection from
$\bar x^0,\ldots,\bar x^{T-1}$ gives expected squared gradient norm equal
to the average in~\eqref{cv:rate-stationarity}. Substituting
$\Sigma^2=\sigma_{\rm sgd}^2/b$ gives expected squared gradient norm
$\mathcal O(T^{-2/3})$ for fixed problem, network and batch parameters.
Cauchy--Schwarz gives expected gradient norm
$\mathcal O(T^{-1/3})$.
The per-node gradient count is $b_0+2b(T-1)$, interpreted in
expectation for Poisson sampling.
Since $b_0=\mathcal O(bT^{1/3})$, the per-node oracle complexity
is $\mathcal O(bT)$ over $T$ iterations.

\paragraph{Common-variance private extension.}
Retain the DP variance in~\eqref{cv:initial-errors} and
\eqref{cv:unified-variance}.
The sampling parts yield the scheduled terms already proved above.
The DP noise variance contributes to~\eqref{cv:finite-bound}
\[
d\nu_{\rm dp}^2\left[
\frac{2}{n\gamma T}+\frac{4\gamma}{n}
+4QK_Hn\left(\frac1{\gamma T}+2\gamma\right)\right].
\]
Using~\eqref{cv:scheduled-K}, the first two terms are respectively
$\mathcal O(d\nu_{\rm dp}^2/(nT^{1/3}))$ and
$\mathcal O(d\nu_{\rm dp}^2/(nT^{2/3}))$.
The two components of $Q$ applied to $4K_Hn/(\gamma T)$
give $\mathcal O(d\nu_{\rm dp}^2\lambda/T)$ and
$\mathcal O(d\nu_{\rm dp}^2\lambda/(bnT^{1/3}))$.
Those applied to $8K_Hn\gamma$ give
$\mathcal O(d\nu_{\rm dp}^2\lambda/T^{4/3})$ and
$\mathcal O(d\nu_{\rm dp}^2\lambda/(bnT^{2/3}))$.
The latter two terms and the averaged iterative term
are bounded by their corresponding initial terms.
Thus the complete DP contribution is
\[
\mathcal O\!\left(d\nu_{\rm dp}^2
 \left[\frac1{nT^{1/3}}+\frac{\lambda}{T}
 +\frac{\lambda}{bnT^{1/3}}\right]\right).
\]
This is a bound for fixed $\nu_{\rm dp}^2$; under a fixed privacy
budget its horizon dependence must also be substituted.

The terms displayed in the main theorem follow from
\[
T^{-5/3}+(bnT)^{-1}\leq(1+bn)(bnT)^{-1},
\]
and
\[
1+\lambda nT^{-2/3}+\lambda/b
\leq(1+n)(1+\lambda/b).
\]
These estimates preserve the stated independence from $T$ and $\lambda$
of the hidden constants.

\section{Privacy proof}\label{app:privacy}
Each local data set is represented by $N$ public slots.
An empty slot $\bot$ has zero loss and gradient.
Add/remove adjacency replaces one real record by $\bot$, or conversely,
at one node. The capacity $N$ and the sampling probabilities $b_0/N$ and $b/N$
are unchanged on adjacent data sets.
For each fresh batch,
\begin{equation}\label{pv:poisson-estimator}
\widehat g_i(x;\mathcal B_i^t)=\frac1b
\sum_{r\in\mathcal B_i^t}\nabla\ell_i(x;\zeta_{ir}).
\end{equation}
The protected node keeps its sampled identities and Gaussian draws private.

\paragraph{R\'enyi accounting.}
We use R\'enyi differential privacy (RDP) to compose adaptive releases
and account for amplification by subsampling~\cite{mironov2017rdp}.
For probability laws $\mu,\mu'$ and order $a>1$, define
\begin{equation}\label{eq:renyi-divergence}
D_a(\mu\|\mu'):=\frac1{a-1}
\log\int\left(\frac{d\mu}{d\mu'}\right)^a d\mu',
\end{equation}
with divergence $+\infty$ when $\mu$ is not absolutely continuous with
respect to $\mu'$.

\begin{definition}[Record-level R\'enyi differential privacy]\label{def:record-rdp}
A randomized mechanism $\mathcal A$ is $(a,\varepsilon_a)$-RDP if,
for every ordered adjacent pair $D\sim D'$, the laws $\mu_D,\mu_{D'}$
of its outputs satisfy
\begin{equation}\label{eq:rdp-definition}
D_a(\mu_D\|\mu_{D'})\leq\varepsilon_a.
\end{equation}
\end{definition}
At the same order $a$, adaptive composition adds the conditional
RDP costs of the fresh releases.
An $(a,\varepsilon_a)$-RDP mechanism is also
$(\varepsilon,\delta)$-DP whenever
\begin{equation}\label{eq:rdp-conversion}
\varepsilon_a+\frac{\log(1/\delta)}{a-1}\leq\varepsilon.
\end{equation}
This conversion~\cite{mironov2017rdp} gives the transcript privacy
guarantee used in the analysis.

\paragraph{Sensitivity and Gaussian perturbation.}
For the fresh query, Euclidean sensitivity at a fixed history and mask is
\begin{equation}\label{eq:query-sensitivity}
\Delta_{q_i^t}=\sup_{D\sim D'}
\|q_i^t(D;\mathcal B)-q_i^t(D';\mathcal B)\|.
\end{equation}
The recursive perturbation has covariance $\gamma^2\nu_{\rm dp}^2I_d$,
whereas initialization uses $\nu_0^2I_d$.
Privacy depends on the noise standard deviation relative to sensitivity.

\paragraph{Reconstruction from the transcript.}
The messages $\phi_i^t$ determine $x_i^t=\sum_jW_{ij}\phi_j^t$.
Starting from the public $\psi_i^0=x_i^0$, the identities
$\psi_i^t=\phi_i^t-x_i^{t-1}+\psi_i^{t-1}$ and
$v_i^{t-1}=(x_i^{t-1}-\psi_i^t)/\alpha$ recover historical directions.
Thus the current and previous model points and the old direction
are fixed by the past transcript.
The fresh query and noisy direction are
\begin{equation}\label{pv:conditional-query}
q_i^t=\gamma g_i^t+(1-\gamma)d_i^t,
\quad v_i^t=(1-\gamma)v_i^{t-1}+q_i^t+\gamma\xi_i^t.
\end{equation}
Let $D\sim D'$ differ at slot $r_*$ of node $i$, with record
$\zeta$ in $D$ and $\bot$ in $D'$. Fix the same past transcript
and the same Poisson sampling mask $\mathcal B$ on both data sets.
The model points are then identical, and every unchanged slot cancels:
\[
\begin{aligned}
&q_i^t(D;\mathcal B)-q_i^t(D';\mathcal B)\\
&=\frac{\mathbf1\{r_*\in\mathcal B\}}b
\Big\{\gamma\operatorname{clip}_{C_g}(\nabla\ell_i(x_i^t;\zeta))
+(1-\gamma)\operatorname{clip}_{C_\Delta}
(\nabla\ell_i(x_i^t;\zeta)-\nabla\ell_i(x_i^{t-1};\zeta))\Big\}.
\end{aligned}
\]
Reversing the adjacent pair changes only the sign. Applying the
clipping bounds and taking the supremum over adjacent pairs yields
\[
\begin{aligned}
\Delta_{q_i^t}
&\leq\frac{\gamma C_g+(1-\gamma)C_\Delta}{b}
=\frac{S_\gamma}{b}.
\end{aligned}
\]
If the changed slot is not sampled, the difference is zero.
The sampling probability $b/N$ enters the subsampled Gaussian
accountant below, rather than multiplying this worst-case sensitivity.
The same argument gives an initial gradient-query sensitivity bound
$C_g/b_0$.

\paragraph{Exact finite-order accountant.}\label{app:accountant}
For integer $a\geq2$, the Poisson-sampled Gaussian RDP bound is
\begin{equation}\label{pv:rdp-function}
\varepsilon_{\rm SG}(a;q,z)=\frac1{a-1}
\log\left[\sum_{k=0}^a\binom ak(1-q)^{a-k}q^k
 \exp\!\left(\frac{k(k-1)}{2z^2}\right)\right].
\end{equation}
Here $z$ is the Gaussian noise standard deviation divided by the
query sensitivity bound, and $q$ is the Poisson sampling probability.
The accountant combines these two quantities to bound the per-round
RDP cost after amplification by subsampling: at fixed $z$, reducing
$q$ lowers this bound. The amplification comes from private random
sampling, not from $z$ itself; $q=1$ gives the unsampled Gaussian
cost $a/(2z^2)$.
The sampled-Gaussian reduction bounds both neighboring divergence
orders by this expression~\cite[Cor.~7 and Sec.~3.3]{mironov2019sgm}.
Expanding the likelihood ratio
$(1-q)+q\exp((2u-1)/(2z^2))$ to integer power $a$ and integrating
against $u\sim\mathcal N(0,z^2)$ gives the displayed finite sum.

\begin{lemma}[Transcript privacy with DP noise]\label{pv:privacy-utility}
Let $b_0\geq b$, and choose $z>0$ and an integer $a\geq2$ satisfying
\begin{equation}\label{pv:accountant-test}
\varepsilon_{\rm SG}(a;b_0/N,z)
+(T-1)\varepsilon_{\rm SG}(a;b/N,z)
+\frac{\log(1/\delta)}{a-1}\leq\varepsilon.
\end{equation}
The DP noise variances
\begin{equation}\label{pv:main-noise-orders}
\nu_0^2=\frac{z^2C_g^2}{b_0^2},\qquad
\nu_{\rm dp}^2=\frac{z^2S_\gamma^2}{\gamma^2b^2}
\end{equation}
give $(\varepsilon,\delta)$-DP for the complete transcript and
for an honest-but-curious node's view of records held elsewhere.
\end{lemma}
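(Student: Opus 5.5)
The plan is to identify the released transcript with a sequence of adaptively chosen Poisson-subsampled Gaussian mechanisms, one per node and round. We bound the per-round Rényi cost with the finite-order accountant $\varepsilon_{\rm SG}$ and then compose adaptively. Finally, the conversion~\eqref{eq:rdp-conversion} at the order $a$ fixed by~\eqref{pv:accountant-test} gives $(\varepsilon,\delta)$-DP.

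\emph{Step 1 (reduction to per-node noisy directions).} By the reconstruction paragraph, the transcript $\{\phi_j^t\}$ is a deterministic, invertible post-processing of the noisy directions $\{v_j^t\}$ together with public quantities ($x^0$, $W$, $\alpha$). Conversely, each $\phi_j^{t+1}$ is a deterministic function of $v_j^t$ and the past transcript. The RDP of the transcript law therefore equals that of the sequence $(v_j^t)_{j,t}$. An adjacent pair differs at a single slot of a single node $i$. Conditioned on the past transcript, the other nodes' releases have identical laws under $D$ and $D'$: their batches and noise are independent of node $i$'s private randomness, and their data are unchanged. They therefore contribute zero divergence.

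An honest-but-curious node $k\neq i$ sees a subset of the transcript plus its own data and randomness. Its randomness is independent of node $i$'s data, so its view is post-processing of the full transcript jointly with quantities whose law is identical under $D$ and $D'$. Data processing then reduces this case to the external case.

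\emph{Step 2 (per-round mechanism).} Fix the past transcript. Then $x_i^t$, $x_i^{t-1}$, and $v_i^{t-1}$ are fixed, and releasing $v_i^t$ is equivalent to releasing $q_i^t+\gamma\xi_i^t$. This is a Gaussian mechanism with standard deviation $\gamma\nu_{\rm dp}=zS_\gamma/b$, applied to a query whose sensitivity at fixed mask is at most $S_\gamma/b$ by the computation above. The query is nonzero only when the changed slot is sampled, which happens with probability $b/N$. The noise-to-sensitivity ratio is thus at least $z$, and the sampled-Gaussian bound of~\cite{mironov2019sgm} gives conditional RDP at most $\varepsilon_{\rm SG}(a;b/N,z)$ for both ordered pairs. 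Here we use monotonicity of $\varepsilon_{\rm SG}$ in $z$ when the true sensitivity is smaller, and we use that the mask and identities stay private.

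The initialization is treated the same way: it has sensitivity $C_g/b_0$ and standard deviation $\nu_0=zC_g/b_0$, so its cost is $\varepsilon_{\rm SG}(a;b_0/N,z)$. Clipping makes the sensitivity bounds hold for every history, so Assumption~\ref{ass:clipping} is not needed here.

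\emph{Step 3 (composition and conversion).} Adaptive composition~\cite{mironov2017rdp} adds the conditional costs over $t=0,\dots,T-1$. This gives total RDP $\varepsilon_{\rm SG}(a;b_0/N,z)+(T-1)\varepsilon_{\rm SG}(a;b/N,z)$, and~\eqref{pv:accountant-test} with~\eqref{eq:rdp-conversion} finishes the proof.

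The main obstacle is Step 2's justification that the subsampled-Gaussian amplification applies conditionally. The mask must remain hidden even though $x_i^t$ depends on earlier masks, and the query must be an adaptive function only of the released past, not of earlier private masks. I would handle this by noting that the past masks influence the current query only through the released transcript, via the reconstruction identities. The current mask is fresh and independent. Hence the conditional mechanism is exactly a Poisson-subsampled Gaussian with public inputs, and~\cite[Cor.~7]{mironov2019sgm} applies per round.
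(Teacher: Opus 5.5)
Your external-observer argument follows the paper's proof. You reconstruct $x_i^t$, $x_i^{t-1}$, $v_i^{t-1}$ from the past transcript and treat each release as a Poisson-subsampled Gaussian with multiplier $z$, with sensitivities $S_\gamma/b$ and $C_g/b_0$. You note that the other nodes' conditional kernels coincide on $D$ and $D'$, then compose adaptively and convert. Your observation that inactive clipping is not needed for privacy is also correct.

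The gap is in the internal-observer step. You argue that node $k$'s random tape (its batches and Gaussian draws) has the same law under $D$ and $D'$ and is independent of node $i$'s data. From this you conclude that its view is post-processing of the transcript jointly with such quantities, so data processing reduces it to the external case. That does not follow. Data processing needs a bound on the divergence of the joint law of (tape, transcript). The tape is correlated with the transcript through node $k$'s own messages, which also depend on what node $i$ sent. A bound on the transcript's law plus an identical, data-independent tape marginal does not control the joint. For example, if $U$ is a uniform bit independent of a secret bit $s$, both $U$ and $U\oplus s$ have laws free of $s$, yet $(U,U\oplus s)$ reveals $s$.

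The missing step, which the paper supplies, is to condition on node $k$'s tape. Given the tape, node $k$'s messages are deterministic functions of its fixed data and the past transcript. Node $i$'s fresh mask and Gaussian draws remain independent of the tape, so your per-round bound $\varepsilon_{\rm SG}$ and the composition hold uniformly in the tape. Since the tape law is common to $D$ and $D'$, averaging the conditional moments $\exp((a-1)D_a)$ over it gives the same RDP bound for the joint law. The received-message view then follows by post-processing. Equivalently, you can run your Steps 2--3 conditionally on the tapes of all nodes other than $i$ from the outset, which covers every internal observer at once.
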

\begin{proof}
\emph{Step 1: account for the recursive and initial queries.}
For each recursive release, its noise standard deviation
$\gamma\nu_{\rm dp}$ divided by the sensitivity bound $S_\gamma/b$
equals $z$.
For the initial release, $\nu_0$ divided by $C_g/b_0$ also equals $z$.
Its cost is therefore $\varepsilon_{\rm SG}(a;b_0/N,z)$; each later release costs
at most $\varepsilon_{\rm SG}(a;b/N,z)$.

\emph{Step 2: compose all communication rounds.}
The communication record consists of all messages
$\{\phi_i^{t+1}:i=1,\ldots,n,\ t=0,\ldots,T-1\}$.
Fix the changed node and a common past transcript.
All other nodes have the same conditional kernels on the neighboring
data sets, so the joint release incurs one changed node's cost,
not $n$ times that cost.
Adaptive composition and~\eqref{eq:rdp-conversion} establish the budget.

\emph{Step 3: include the internal observer's information.}
For an internal observer other than the changed node, its local
data are fixed on the adjacent pair.
Condition additionally on its data-independent random tape.
Its messages are then deterministic functions of its data, tape,
and the past transcript, while the protected node retains its fresh
sampling and Gaussian draws.
The same conditional bounds hold uniformly in the tape.
Integrating conditional R\'enyi moments against the common tape law
preserves the RDP bound.
The received-message view is a projection and inherits DP by
post-processing~\cite{dwork2014algorithmic}.

Adaptation, correction, and mixing are deterministic post-processing
of the noisy directions.
The reconstruction identities show that mixing itself supplies no
privacy amplification against the full-transcript observer.
\end{proof}

\paragraph{Explicit sufficient variance.}
Choose $a=\lceil2\log(1/\delta)/\varepsilon\rceil+1$ and set
\begin{equation}\label{pv:explicit-multipliers}
z^2=\max\left\{\frac{a(a-1)}2,\,
 \frac{2(e-1)a}{\varepsilon}
 \left[\left(\frac{b_0}{N}\right)^2
 +(T-1)\left(\frac bN\right)^2\right]\right\}.
\end{equation}
For $K\sim\operatorname{Binomial}(a,q)$, the first condition makes
$u=K(K-1)/(2z^2)\in[0,1]$.
Using $e^u\leq1+(e-1)u$ and $\log(1+x)\leq x$ yields
\[
\varepsilon_{\rm SG}(a;q,z)
\leq\frac{(e-1)a q^2}{2z^2}.
\]
Applying the same bound to the initialization rate $b_0/N$ and the
$T-1$ recursive rates $b/N$, Equation~\eqref{pv:explicit-multipliers}
bounds the composition cost
by $\varepsilon/4$ and the conversion by $\varepsilon/2$,
and therefore satisfies~\eqref{pv:accountant-test}.
For $0<\varepsilon\leq1$, $0<\delta<1/2$,
$a=\mathcal O(\log(1/\delta)/\varepsilon)$, so substitution into
\eqref{pv:main-noise-orders} gives~\eqref{eq:dp-noise-variance}.
Both terms in the maximum are retained, avoiding an unjustified
quadratic sampling approximation at arbitrary noise levels.
The exact accountant can give substantially smaller noise.
If $C_g=C_\Delta=0$, no privacy noise is required.

\paragraph{Proof of Theorem~\ref{thm:utility}.}\label{app:utility}
\begin{proof}
The stationarity bound~\eqref{cv:rate-stationarity}, with
$\Sigma^2=\sigma_{\rm sgd}^2/b$, gives the optimization,
initial-heterogeneity, and sampling terms in~\eqref{eq:private-utility}.
The private extension in Appendix~\ref{app:schedules} adds the
DP contribution
\[
\mathcal O\!\left(
\frac{d\nu_{\rm dp}^2}{nT^{1/3}}
\left(1+\frac{\lambda}{b}\right)\right).
\]
Substituting the noise variance~\eqref{eq:dp-noise-variance}
gives its privacy term
\[
\mathcal O\!\left(
\frac{dS_\gamma^2}{n\gamma^2\varepsilon^2T^{1/3}}
\left(1+\frac{\lambda}{b}\right)
\left[\frac{T\log(1/\delta)}{N^2}
+\frac{\log^2(1/\delta)}{b^2}\right]\right).
\]
Combining these four contributions establishes~\eqref{eq:private-utility}.
Lemma~\ref{pv:privacy-utility} supplies the stated privacy guarantee,
including initialization. This completes the proof of
Theorem~\ref{thm:utility}.
\end{proof}

\section{Sensitivity example}
\label{app:sensitivity-example}
Consider $d=1$ and the finite-sum losses
\begin{equation}\label{eq:logcosh-example}
\ell_i(x;\zeta_{ir})=a_{ir}x+\alpha\log\cosh x,
\qquad |a_{ir}|\leq A,
\end{equation}
where the coefficients satisfy
\[
\frac1{nN}\sum_{i=1}^n\sum_{r=1}^N a_{ir}=0,
\qquad
\bar a_i:=\frac1N\sum_{r=1}^N a_{ir}
\quad\text{are not all equal}.
\]
Run Algorithm~\ref{alg:storm-ed} with
$0<\alpha<\min\{1,A\}$ and Poisson batches of expected size $b<N$.
Let $n\geq4$ and take a ring network with
$W_{ii}=1/2$ and $W_{i,i-1}=W_{i,i+1}=1/4$, with indices modulo $n$.

\begin{proposition}[Small recursive sensitivity]
\label{prop:sparse-sensitivity}
For the objective~\eqref{eq:logcosh-example}, Assumptions
\ref{ass:objectives}--\ref{ass:clipping} hold with
\[
L_s=1,\qquad \sigma_{\rm sgd}=A+1,\qquad
C_g=A+\alpha,\qquad C_\Delta=2\alpha.
\]
For every past transcript, sampling mask, and realization of the
Gaussian noise, the conditional query sensitivity satisfies
\begin{equation}\label{eq:example-sensitivity}
\Delta_{q_i^t}\leq
\frac{\gamma A+(2-\gamma)\alpha}{b}
=\mathcal O\!\left(\frac{\gamma+\alpha}{b}\right).
\end{equation}
This is strictly smaller than $(A+\alpha)/b$, the corresponding
$\gamma=1$ bound, whenever $\gamma<1$.
\end{proposition}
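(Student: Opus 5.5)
The plan is to verify each assumption by direct computation on the one-dimensional losses \eqref{eq:logcosh-example}, and then insert the resulting clipping thresholds into the sensitivity bound of Section~\ref{src:secprivacy}. The per-record gradient is $\nabla\ell_i(x;\zeta_{ir})=a_{ir}+\alpha\tanh x$, and its derivative is $\alpha\operatorname{sech}^2x\in(0,\alpha]$.

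\emph{Assumption~\ref{ass:objectives}.} Since $\alpha<1$, each record gradient is $\alpha$-Lipschitz, hence $1$-Lipschitz, so $L_s=1$.

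\emph{Assumption~\ref{ass:network}.} The ring matrix with weights $1/2,1/4,1/4$ is symmetric, nonnegative and doubly stochastic. Its eigenvalues are $\tfrac12+\tfrac12\cos(2\pi k/n)\in[0,1]$, so it is positive semidefinite. Eigenvalue one is simple because the ring is connected, and $\lambda=\tfrac12+\tfrac12\cos(2\pi/n)<1$.

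\emph{Assumption~\ref{ass:oracle}.} The Poisson estimator $\widehat g_i$ is unbiased, since each record is included with probability $b/N$ and the sum is normalized by $b$. Its variance is
\[
\frac{1-b/N}{bN}\sum_r\|\nabla\ell_i(x;\zeta_{ir})\|^2\leq\frac{(A+\alpha)^2}{b}\leq\frac{(A+1)^2}{b}.
\]
This is the same computation as in \eqref{cv:oracle-difference}, with a single point in place of a difference, and it gives $\sigma_{\rm sgd}=A+1$; the same argument covers $b_0$.

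\emph{Assumption~\ref{ass:clipping}.} This step holds for every trajectory, with no conditions on the iterates, because $|\tanh|\leq1$. Indeed $|a_{ir}+\alpha\tanh x|\leq A+\alpha=C_g$ for every $x$. Likewise,
\[
|\nabla\ell_i(x;\zeta)-\nabla\ell_i(y;\zeta)|=\alpha|\tanh x-\tanh y|\leq2\alpha=C_\Delta
\]
for all $x,y$, so clipping is inactive for every past transcript, sampling mask and Gaussian draw.

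\emph{Sensitivity bound.} Substituting these thresholds into \eqref{eq:sensitivity} gives
\[
S_\gamma=\gamma(A+\alpha)+2(1-\gamma)\alpha=\gamma A+(2-\gamma)\alpha.
\]
The already-established bound $\Delta_{q_i^t}\leq S_\gamma/b$ holds uniformly over histories and masks, which yields \eqref{eq:example-sensitivity}. One could also bypass clipping and bound the changed-record contribution $\gamma(a+\alpha\tanh x_i^t)+(1-\gamma)\alpha(\tanh x_i^t-\tanh x_i^{t-1})$ directly. Since $S_\gamma\leq\gamma A+2\alpha$, we get $S_\gamma=\mathcal O(\gamma+\alpha)$.

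\emph{Strict comparison with $\gamma=1$.} We have
\[
(A+\alpha)-S_\gamma=(1-\gamma)(A+\alpha)-2(1-\gamma)\alpha=(1-\gamma)(A-\alpha).
\]
This is strictly positive whenever $\gamma<1$, because $\alpha<A$ by hypothesis.

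\emph{Main obstacle.} The substantive point is the uniformity of Assumption~\ref{ass:clipping}: the thresholds must keep clipping inactive along arbitrary noisy trajectories. Boundedness of $\tanh$ handles this, so the proof needs no control of the iterates. The remaining conditions (zero mean of the $a_{ir}$, non-equal node means $\bar a_i$, $n\geq4$) are not needed for the inequalities. They only make the example nondegenerate: the $\bar a_i$ are heterogeneous, so $\zeta_0^2>0$; the ring has $\lambda>0$; and $\inf f$ is finite because $f(x)=\alpha\log\cosh x$. I would note this briefly rather than prove anything extra.
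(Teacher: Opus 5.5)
Your proposal is correct and follows the paper's proof essentially step for step. You verify each assumption by direct computation: $\alpha\operatorname{sech}^2x\le 1$ for smoothness, the ring spectrum, the Poisson variance formula, and boundedness of $\tanh$ for trajectory-independent inactive clipping. You then substitute $C_g=A+\alpha$ and $C_\Delta=2\alpha$ into $S_\gamma$ and obtain $C_g-S_\gamma=(1-\gamma)(A-\alpha)>0$. Your extra remarks on the $b_0$ case of the oracle bound and on which hypotheses only serve nondegeneracy are accurate.
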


\begin{proof}
The record gradient and Hessian are
\[
\nabla\ell_i(x;\zeta_{ir})=a_{ir}+\alpha\tanh x,
\qquad
\nabla^2\ell_i(x;\zeta_{ir})=\alpha\operatorname{sech}^2x.
\]
Since $\alpha<1$, the Hessian is bounded by one, so
Assumption~\ref{ass:objectives} holds with $L_s=1$.
The chosen ring matrix is symmetric, nonnegative, doubly stochastic,
and supported on a connected graph.  Its eigenvalues are
$[1+\cos(2\pi k/n)]/2\in[0,1]$, so
Assumption~\ref{ass:network} holds.

For a point fixed before the current Poisson batch is drawn,
normalization by the expected batch size gives
\[
\E_t[\widehat g_i(x;\mathcal B_i^t)]
=\bar a_i+\alpha\tanh x=\nabla f_i(x).
\]
Moreover,
\[
\E_t|\widehat g_i(x;\mathcal B_i^t)-\nabla f_i(x)|^2
=\frac{1-b/N}{bN}\sum_{r=1}^N
|a_{ir}+\alpha\tanh x|^2
\leq\frac{(A+1)^2}{b}.
\]
Thus Assumption~\ref{ass:oracle} holds with
$\sigma_{\rm sgd}=A+1$.  The local objectives
$f_i(x)=\bar a_i x+\alpha\log\cosh x$ are heterogeneous, while
$f(x)=\alpha\log\cosh x$ is bounded below.

For all records and all $x,y\in\mathbb R$,
\begin{align*}
|\nabla\ell_i(x;\zeta_{ir})|&\leq A+\alpha=C_g,\\
|\nabla\ell_i(x;\zeta_{ir})-\nabla\ell_i(y;\zeta_{ir})|
&=\alpha|\tanh x-\tanh y|\leq2\alpha=C_\Delta.
\end{align*}
Hence both clipping operations are inactive for every stochastic and
noisy trajectory, and Assumption~\ref{ass:clipping} holds.

Fix any past transcript and sampling mask in the conditional
sensitivity definition~\eqref{eq:query-sensitivity}.  The current and
previous models are then fixed, and a selected added record contributes
at most
\[
\frac{\gamma C_g+(1-\gamma)C_\Delta}{b}
=\frac{\gamma A+(2-\gamma)\alpha}{b}.
\]
This proves~\eqref{eq:example-sensitivity}.  Finally,
\[
C_g-S_\gamma=(1-\gamma)(A-\alpha)>0
\]
for $\gamma<1$, which proves the strict comparison.
\end{proof}

\section{Experimental settings}\label{app:experiments}
{The following settings specify the experiments in
Fig.~\ref{fig:rq1}, Table~\ref{tab:cifar-private}, and
Fig.~\ref{fig:cifar-private}.}
We report initialization and per-round work alongside training
parameters because equal communication rounds need not imply equal
gradient-evaluation budgets.
The logistic-regression experiment used no privacy noise, whereas
the CIFAR-10 comparison used matched per-run privacy targets.

\subsection{Nonprivate heterogeneous logistic regression}
There are $n=32$ nodes, each holding $N=2000$ records in dimension $d=20$.
For the $r$th sample at node $i$, $a_{ir}\in\mathbb R^{20}$ is its
feature vector and $y_{ir}\in\{-1,+1\}$ is its binary class label.
The local objective for the model parameter $x\in\mathbb R^{20}$ is
\begin{equation}
f_i(x)=\frac1N\sum_{r=1}^N
\log\!\left(1+\exp(-y_{ir}a_{ir}^{\top}x)\right)
+0.001\sum_{j=1}^{20}\frac{x_j^2}{1+x_j^2}.
\end{equation}
We drew independent features $a_{ir}\sim\mathcal N(0,I_{20})$,
a common teacher $w\sim\mathcal N(0,I_{20})$, and independent
node perturbations $h_i\sim\mathcal N(0,0.2I_{20})$.
Conditional on these variables, each label was sampled independently.
The probability of $y_{ir}=+1$ was
$\operatorname{sigmoid}(a_{ir}^{\top}(w+h_i))$; otherwise,
$y_{ir}=-1$.
The generated features and labels remained fixed throughout training.
Each node shared one perturbation $h_i$ across its local records.
Nodes therefore had the same feature distribution but different
conditional label distributions, with heterogeneity set by covariance
$0.2I_{20}$.
We used the generated features without clipping or rescaling.
The $32$ nodes formed an undirected ring, each communicating
with its two neighbors.
The first and last nodes were connected, completing the ring.
Mixing assigned weight $1/2$ to the node itself and $1/4$ to
each neighbor; all other weights were zero.
All methods started from $x_i^0=0$ and used constant stepsize
$\alpha=1$ for $600$ communication rounds.
PRDO used momentum parameter $\gamma=0.1$ throughout both
the full-batch and minibatch comparisons.
EDM and Momentum Tracking used normalized momentum with history
weight $\beta=0.9$ and current-gradient weight $1-\beta=0.1$.
Their auxiliary states were initialized to zero before the first
gradient update.

The full-batch experiment evaluated all $2000$ local records
at each gradient computation.
In the minibatch experiment, each node independently sampled $16$
records uniformly without replacement at every round.
PRDO used the full local gradient for initialization, then
evaluated each fresh batch at the current and previous iterates.
The baselines used batch size $16$ starting from their first update.
Five paired sampling streams were shared across methods for
the minibatch comparison.
This experiment used neither privacy noise nor gradient clipping
in any of the methods.

\subsection{Private CIFAR-10}
{
We evaluated our PRDO,
DP Exact Diffusion (DP-ED), DP
Exact Diffusion with Momentum (DP-EDM), DP Momentum
Tracking (DP-MT), and DP decentralized stochastic
gradient descent (DP-DSGD) on CIFAR-10 using cross-entropy loss.
The data comprised $45000$ training,
$5000$ validation, and $10000$ test images. We distributed the training
set across ten nodes using a balanced Dirichlet partition with
concentration $0.1$, assigning $4500$ records to each node.

All methods used the same VGG-style network with six convolutional
layers, three max-pooling layers, and two fully connected layers.
The hidden layer had width $128$ and tanh activation. The nodes formed
a lazy ring with self-weight $1/2$ and weight $1/4$ on each neighbor.

Training ran for $T=9500$ communication rounds with constant stepsize
$\alpha=0.05$ and expected batch sizes $b_0=b=100$. Each node used
Poisson sampling with rate $q=100/4500$. Per-record gradients were
clipped at $C_g=1$. PRDO used $\gamma=0.05$ and clipped each
same-record gradient difference at $C_\Delta=0.001$. DP-EDM and DP-MT
used history weight $\beta=0.95$. DP-DSGD used neither a recursive
gradient difference nor momentum.

We considered add/remove record-level privacy with
$\varepsilon\in\{4,6,8\}$ and $\delta=10^{-5}$. For each privacy
budget, the Opacus PRV accountant calibrated the Gaussian noise
multiplier over $9500$ queries. The resulting multipliers were
$2.464143$, $1.807607$, and $1.480246$ for
$\varepsilon=4,6,8$, respectively. PRDO applied the corresponding
noise according to Algorithm~\ref{alg:storm-ed}.
Each entry in Table~\ref{tab:cifar-private} is the arithmetic mean of
the test accuracies at rounds $8500$, $8750$, $9000$, $9250$, and
$9500$.
}

\begin{figure}[!h]
\centering
\includegraphics[width=.95\textwidth]{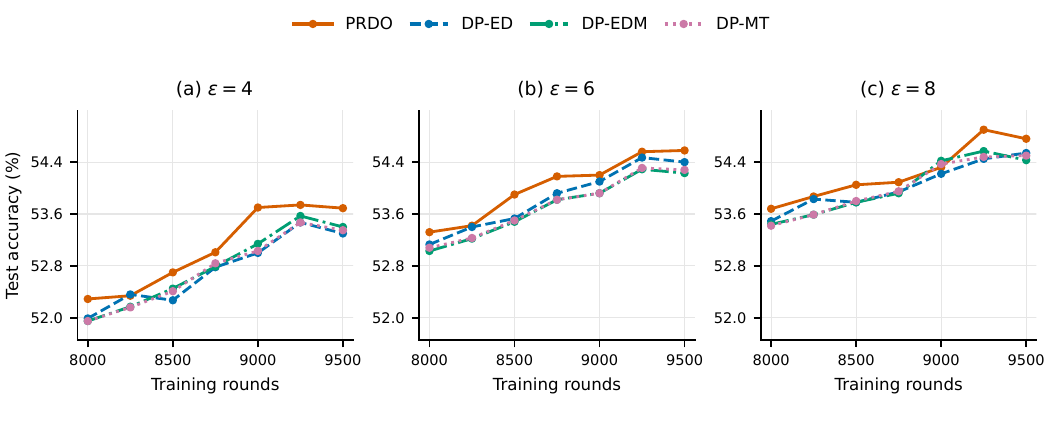}
\caption{CIFAR-10 test accuracy under $\varepsilon=4,6,8$ and $\delta=10^{-5}$. DP-DSGD is
reported in Table~\ref{tab:cifar-private} and omitted here for
readability.}
\label{fig:cifar-private}
\end{figure}

\end{document}